\documentclass{article}

\usepackage[preprint,nonatbib]{neurips_2026}

\usepackage[utf8]{inputenc}
\usepackage[T1]{fontenc}
\usepackage{hyperref}
\hypersetup{
  pdftitle={Structured Memory for Edge Language Models: Persistent Context and Corpus Retrieval via O(1) SSM State Injection},
  pdfauthor={Anusha Madan Gopal, Aras Pirbadian, Kristofor D. Carlson, M Anthony Lewis, Jonathan Tapson},
  pdfsubject={State-Space Models, Retrieval-Augmented Generation, Edge Inference}
}
\usepackage{url}
\usepackage{booktabs}
\usepackage{amsfonts}
\usepackage{amsmath}
\usepackage{amssymb}
\usepackage{amsthm}
\usepackage{nicefrac}
\usepackage{microtype}
\usepackage{xcolor}
\usepackage{graphicx}
\usepackage{enumitem}
\usepackage{placeins}

\newtheorem{theorem}{Theorem}
\newtheorem{proposition}{Proposition}

\title{Structured Memory for Edge Language Models: Persistent Context and Corpus Retrieval via O(1) SSM State Injection}

\author{%
  Anusha Madan Gopal \quad Aras Pirbadian \quad Kristofor D. Carlson \\
  \textbf{M Anthony Lewis} \quad \textbf{Jonathan Tapson} \\
  BrainChip Inc. \\
  23041 Avenida de la Carlota, Laguna Hills, CA \\
  \texttt{agopal@brainchip.com, apirbadian@brainchip.com, kcarlson@brainchip.com,} \\
  \texttt{tlewis@brainchip.com, jtapson@brainchip.com} \\
}

\begin{document}

\maketitle
\begin{abstract}
Retrieval-augmented generation (RAG) imposes a prefill cost proportional to retrieved context length, and---with Transformer backbones---a KV-cache that grows with each generated token. State-Space Models (SSMs) avoid the second cost by construction; we eliminate the first, collapsing prefill from $O(L_{\text{context}})$ to $O(1)$ per query. We introduce \textbf{PRECOG (Pre-Computed Context Injection)}, a retrieval mechanism that exploits a property unique to SSMs: the fixed-size, position-agnostic recurrent hidden state is a complete summary of everything the model has read. PRECOG pre-encodes document corpora offline as SSM hidden states and injects the best-matching state directly at query time, bypassing in-context re-ingestion entirely. The same state-injection mechanism enables \textbf{SMC (Structured Memory Consolidation)}: a hierarchical persistent memory with cognitive-domain clustering, an adjustable fidelity-vs-storage dial, and $O(1)$ session initialization, which consolidates short-term episodic states into long-term semantic memory and fuses both with retrieved corpus states at query time. We demonstrate the system on \textbf{TENNs-LLM}, a 1.2B-parameter gated-SSM language model with a 192~KB hidden state. PRECOG matches in-context RAG answer quality, reducing prefill latency from $\sim$27\,s to $<$6\,ms on edge hardware---a $\sim$4500$\times$ speedup that crosses the threshold from unusable to interactive. The mechanism is architecturally impossible for Transformer KV-caches, which are position-entangled and grow linearly with context length.
\end{abstract}

\section{Introduction}
\label{sec:intro}

Two structural limitations of the dominant Transformer architecture shape the practical deployment of language models. First, the key-value (KV) cache grows linearly with context length: each generated token must attend over all prior tokens, imposing memory bandwidth that scales with the sequence. Second, this cache is position-entangled---keys and values incorporate rotary or absolute positional encodings---which couples tokens to their absolute positions and precludes arbitrary reuse across contexts. Together these properties make Transformer decoders inefficient for long contexts and rigid with respect to cache reuse.

State-Space Models (SSMs) \cite{s4,mamba} compress prior context into a \emph{fixed-size}, \emph{position-agnostic} recurrent hidden state. Per-token inference cost is $O(1)$ in memory and $O(d \cdot N)$ in compute independent of context length; the state encodes \emph{what} has been read, not \emph{where} it was read. These properties have been studied primarily as efficiency gains, but they also enable a refactoring of retrieval-augmented generation (RAG) that is structurally inaccessible to Transformers: pre-computing retrieved corpora as recurrent states and injecting them directly at query time, eliminating context-token ingestion at prefill entirely.

We introduce \textbf{PRECOG (Pre-Computed Context Injection)}, a retrieval mechanism that reduces the context-ingestion cost at prefill from $O(L_{\text{context}})$ to $O(1)$. Offline, PRECOG pre-encodes each chunk of a document corpus by running an SSM language model over it and capturing the resulting per-layer hidden state. At query time, it retrieves the best-matching state via a lightweight embedding index and injects it directly into the model's recurrent state as an initial condition; the model then processes only the user query, with retrieved context already encoded in the initial state. The injection is exact: the SSM recurrence is time-translation invariant, so a state pre-computed by running the model over a chunk is identical to the state the model would reach by running over that chunk at the start of the query. We prove this formally as Theorem~\ref{thm:precog}: PRECOG and in-context RAG produce identical state trajectories under autoregressive SSM dynamics, so the empirical claim that PRECOG matches in-context RAG quality is a mathematical guarantee, not a hopeful experimental finding. This mechanism is the subject of a related pending patent application \cite{brainchip_patent}.

The same refactoring fails for self-attention with position encoding: the KV-cache is not time-translation invariant under rotary or absolute encodings, so a cache pre-computed at positions $0, \ldots, L$ is invalid at any other position. Recomputing it negates any prefill savings. Even setting position-entanglement aside, a Transformer KV-cache at matched scale requires $\sim$500$\times$ more storage per chunk (3000-token context) than the SSM hidden state (Section~\ref{sec:complexity}).

We instantiate PRECOG on \textbf{TENNs-LLM}, a 1.2B-parameter gated-SSM language model with 24 layers and a compact 192~KB total hidden state---small enough to make state-level retrieval and caching practical at corpus scale. We chose this backbone because of its integration with our target neuromorphic edge platform (Appendix~\ref{app:hardware}), where the on-device inference budget makes state-level retrieval operationally necessary. PRECOG itself is general: the time-translation-invariance assumption of Theorem~\ref{thm:precog} holds for any selective-SSM backbone, including Mamba~\cite{mamba}, Mamba-2~\cite{ssmduality}, and related linear-recurrent architectures~\cite{rwkv,retnet}.

On domain-specific question answering, PRECOG matches the answer quality of in-context RAG on the same backbone while reducing prefill latency from $\sim$27\,s to $<$6\,ms on the deployment target---a $\sim$4500$\times$ speedup. We further report deployment of the full system on a neuromorphic edge processor (Appendix~\ref{app:hardware}) where ingestion-free retrieval is operationally necessary.

The same state-injection mechanism extends beyond corpus retrieval. On-device language models often need to maintain persistent context that accumulates over time---a user's preferences, a device's interaction history, an appliance's accumulated logs---in a memory budget bounded by the deployment platform. We introduce \textbf{SMC (Structured Memory Consolidation)}, an organization of hidden states from past interactions into a hierarchical persistent memory: stored states are partitioned into cognitive-domain clusters with two-level retrieval (first to a domain, then to specific entries within it), an adjustable fidelity-vs-storage dial trades memory size for recall precision, and session initialization remains $O(1)$ regardless of how much context has accumulated. SMC consolidates short-term episodic states into long-term semantic memory and fuses both with retrieved corpus states at query time, unifying corpus retrieval and persistent memory under a single state-injection substrate. Section~\ref{sec:smc} describes the architecture and reports its empirical behavior.

\section{Related Work}

\paragraph{State-Space Models.} S4 \cite{s4}, S5 \cite{s5}, and Mamba \cite{mamba,ssmduality} established structured and selective SSMs as Transformer-competitive sequence models; RWKV \cite{rwkv} and RetNet \cite{retnet} explore related linear-recurrent designs. We use TENNs-LLM (Section~\ref{sec:tenns}), a selective SSM with bottlenecked gating; PRECOG is independent of these architectural details.

\paragraph{Retrieval-Augmented Generation.} RAG \cite{rag}, Fusion-in-Decoder \cite{fid}, RETRO \cite{retro}, Atlas \cite{atlas}, and REPLUG \cite{replug} all condition generation on retrieved text but require in-context ingestion at inference. xRAG \cite{xrag} compresses documents to reduce token cost but still ingests a compressed representation per query. Prompt-compression methods (LLMLingua \cite{llmlingua}, AutoCompressor \cite{autocompressor}, Gisting \cite{gisting}) similarly reduce but do not eliminate ingestion. PRECOG eliminates ingestion entirely by injecting the model's own hidden state.

Most directly related, State Soup~\cite{statesoup} and PICASO~\cite{picaso} cache and linearly compose SSM hidden states—State Soup for in-context skill mixing, PICASO via a permutation-invariant composition algebra for multi-context retrieval. Both methods are training-time modifications: PICASO trains a learned composition function over hidden states, and State Soup learns context-mixing weights. PRECOG, by contrast, operates on a stock SSM fine-tuned for standard next-token prediction; the state-injection identity (Theorem~\ref{thm:precog}) is purely algebraic and requires no PRECOG-specific training. PRECOG also addresses two settings PICASO and State Soup do not: (i) corpus retrieval with retrieval-score-weighted top-$k$ composition for RAG (Section~\ref{sec:precog-method}), and (ii) structured device memory, where a long-term persistent state---accumulated user, device, or appliance history---is fused with a short-term episodic state to answer queries (Section~\ref{sec:smc}). Both settings target the edge-deployment regime, where context-ingestion latency dominates total query latency on bandwidth-constrained hardware---a regime not analyzed by prior work, which is evaluated on data-center GPUs. Memory Caching \cite{memorycaching} caches state checkpoints \emph{within} a sequence to extend effective context during inference, while PRECOG caches states \emph{offline at indexing time}.

\paragraph{Hidden-state interventions and KV-cache reuse.} Soft prompts \cite{lester2021}, prefix tuning \cite{li2021prefix}, activation steering \cite{turner2023}, and in-context vectors \cite{liu2023icv} inject \emph{learned} vectors into language-model representations to condition generation; PRECOG injects \emph{model-derived} states from the retrieved chunk itself, with exact rather than approximate equivalence to in-context conditioning (Theorem~\ref{thm:precog}). On the Transformer side, paged attention \cite{vllm} and prompt caching reuse KV-caches across requests sharing a prefix, but the position-dependent structure of attention precludes arbitrary cross-query injection.

\paragraph{Edge inference.} Prior work on small efficient LMs \cite{phi,mobilellm} targets Transformer architectures. Our deployment (Appendix~\ref{app:hardware}) complements this literature; our primary contribution is algorithmic.

\section{TENNs-LLM}
\label{sec:tenns}

We instantiate PRECOG (corpus retrieval) and SMC (persistent device memory) on TENNs-LLM, a 1.2B-parameter decoder-only language model in the family of gated selective SSMs. The model has 24 layers with embedding dimension $d=2048$ and per-layer state dimension $N=4096$; full configuration is summarized in Table~\ref{tab:config}. Each TENNs block applies a selective SSM in which the discretization timescale $\Delta t$ and input projection $B$ are derived from the current token through a two-layer bottleneck (intermediate dimension 256), preserving Mamba-style temporal selectivity at reduced gating-projection parameter cost. At inference, the per-layer hidden state occupies 8~KB at FP16; across 24 layers the total recurrent state footprint is \textbf{192~KB}---small enough to make state caching, retrieval, and injection practical at corpus scale, and small enough to fit on-device alongside model weights on neuromorphic hardware (Appendix~\ref{app:hardware}).

\begin{table}[t]
  \caption{TENNs-LLM configuration hyperparameters.}
  \label{tab:config}
  \centering
  \small
  \begin{tabular}{ll}
    \toprule
    \textbf{Hyperparameter} & \textbf{Value} \\
    \midrule
    Embedding dimension & 2{,}048 \\
    Inner dimension & 4{,}096 \\
    Number of layers & 24 \\
    SSM state size & 4{,}096 (16 coeff $\times$ 256 repeat) \\
    SSM mode & Gated / selective ($\Delta t$, $B$ input-dependent) \\
    Gating bottleneck dim & 256 \\
    Causal conv kernel & 4 \\
    LoRA rank & 32 \\
    Tokenizer & Mistral-7B-v0.1 (32{,}000 vocab) \\
    Precision (training / inference) & FP32 / INT4 weight quantization \\
    Total parameters & $\sim$1.2B \\
    \bottomrule
  \end{tabular}
\end{table}

\paragraph{The recurrence is autoregressive and position-agnostic.} Each SSMLayer evolves a hidden state $\bar{h}_t \in \mathbb{C}^N$ via
\begin{equation}
\bar{h}_t = \bar{h}_{t-1} \cdot \exp(-\Delta t_t \cdot A) + B_t \cdot x_t \cdot \Delta t_t, \qquad y_t = C(\bar{h}_t),
\label{eq:ssm-state}
\end{equation}
where $\Delta t_t$ and $B_t$ are functions of the current token only and $A$ is a diagonal complex matrix with $A_k = -\text{softplus}(\alpha_k) + i\pi k$. The update map $\Phi(h, x) := h \cdot \exp(-\Delta t(x) \cdot A) + B(x) \cdot x \cdot \Delta t(x)$ depends on $(h, x)$ only, with no explicit dependence on $t$. This time-translation invariance is the property that makes pre-computed state injection exact (Theorem~\ref{thm:precog}); the bottlenecked gating, while parameter-efficient, is incidental to it. PRECOG and SMC therefore extend without modification to other position-agnostic selective SSMs, including Mamba~\cite{mamba} and Mamba-2~\cite{ssmduality}.

Full architectural details---TENNs block structure (RMSNorm, causal convolution front-end, gated residual path, output projection), $A$-matrix parameterization and timescale initialization, and training/inference duality (FFT-based parallel training in $O(L \log L)$ versus pure recurrent inference in $O(N)$ per token)---are in Appendix~\ref{app:training}.

\section{PRECOG: Pre-Computed Context Injection}
\label{sec:precog}

\begin{figure}[t]
  \centering
  \includegraphics[width=0.6\linewidth]{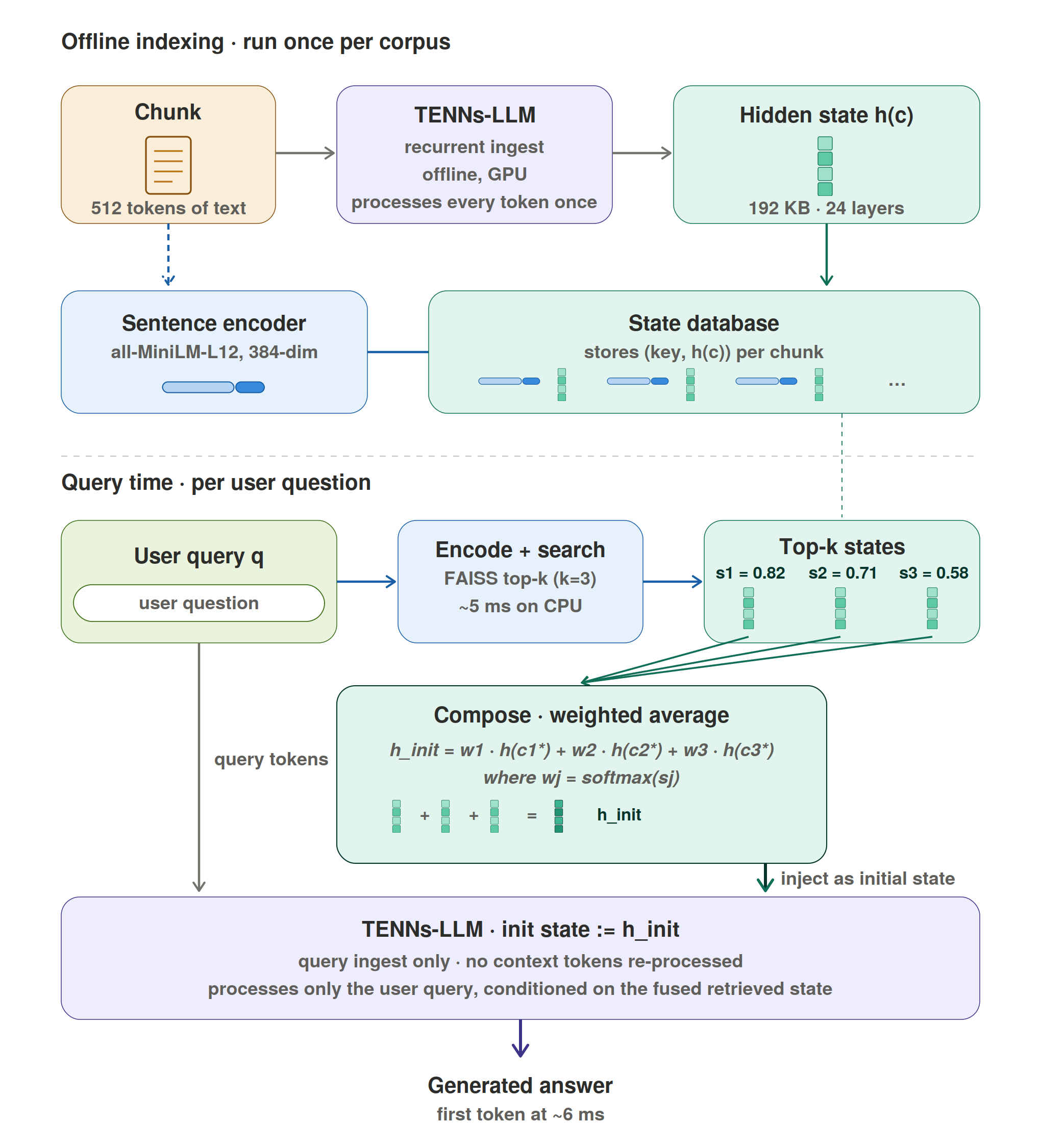}
  \caption{The PRECOG pipeline. \textbf{Offline indexing (top):} each chunk is encoded once by the SSM into a hidden state $h(c)$, paired with a sentence-encoder key. \textbf{Query time (bottom):} the query is encoded; the top-$k$ states ($k{=}3$ default) are retrieved by similarity, composed via softmax-weighted averaging into $h_{\text{init}}$, and injected as the initial recurrent state. The model processes only query tokens, producing the first generated token at $\sim$6\,ms after retrieval. No context tokens are re-ingested.}
  \label{fig:pipeline}
\end{figure}

\subsection{Motivation}
\label{sec:precog-motivation}

In conventional SSM-based RAG, retrieved chunks are re-ingested token-by-token at every query during the prefill phase, costing $O(L_{\text{context}})$ in latency and energy. For a 1.2B-parameter model at edge throughput (Appendix~\ref{app:hardware}), ingesting a single 512-token chunk takes $\sim$27 seconds before the first response token---a regime in which prefill dominates end-to-end latency. PRECOG exploits a property unique to recurrent models: the hidden state $\bar{h}$ after processing a chunk is a complete fixed-size summary of what the model read. Saving $\bar{h}$ and re-loading it as an initial condition is equivalent to re-ingesting the chunk. For SSMs this equivalence is \emph{exact}; context ingestion at prefill therefore reduces to a single state copy---$O(1)$ in retrieved-context length.

\subsection{Method}
\label{sec:precog-method}

\paragraph{Offline indexing.} Given a knowledge base $\mathcal{K}$, we partition it into chunks $\{c_i\}_{i=1}^M$. Chunk boundaries can follow any strategy---fixed-length token windows, paragraph or sentence boundaries, or document-aware splits---since the SSM consumes each chunk independently and produces a fixed-size hidden state regardless of input length. Throughout this paper we report results with $L_{\text{chunk}} = 512$ tokens for direct comparison with KV-cache baselines, but the method imposes no length constraint. Each chunk is run through the SSM in inference mode, capturing the 24-layer final hidden state $\bar{h}^{(i)} \in \mathbb{R}^{24 \times 4096}$ stored at FP16 (192~KB per entry, independent of chunk length). A lightweight sentence encoder $\phi$ (all-MiniLM-L12-v2, 384-dim) produces the retrieval key $k_i = \phi(c_i)$. Database entries $(k_i, \bar{h}^{(i)})$ are persisted to flash storage; on edge devices the corpus is too large to keep resident in DRAM (a $10{,}000$-chunk corpus is $1.9$~GB of states alone). Keys (compact 384-dim vectors, $\sim$3.8~MB for 10K chunks) are loaded into DRAM and indexed in FAISS for nearest-neighbor search; full hidden states remain on flash and are demand-loaded only when retrieved.

\paragraph{Query-time injection.} At query time, the system encodes the query $q$ via $\phi$, retrieves the top-$k$ chunks by cosine similarity ($k=3$ default), and demand-loads the corresponding states from flash into DRAM. The SSM's recurrent state is then initialized from the top-1 stored hidden state:
\begin{equation}
\bar{h}^{\text{init}}_\ell \leftarrow \bar{h}^{(i^*)}_\ell, \quad \ell = 1, \ldots, 24.
\end{equation}
The model processes the query tokens from this contextualized state and generates via top-$p$ sampling ($p=0.9$). The end-to-end query-time cost decomposes as: sentence-encoder forward pass over the query ($\sim$5~ms on CPU), FAISS top-$k$ search in DRAM ($<1$~ms for our corpus), flash-to-DRAM transfer of the 192~KB selected state ($\sim$50~$\mu$s on UFS~4.0 at 4.2~GB/s), and 24 vector copies into the SSM's recurrent buffer ($<1$~ms). PRECOG adds $\sim$6~ms total overhead versus zero-context generation, replacing the $\sim$27~s of in-context ingestion at the same throughput.

\subsection{Theoretical Guarantee}
\label{sec:theory}

The SSM update map can be written abstractly as $\Phi(h, x) := h \odot \alpha(x) + \beta(x)$, where $\alpha(x), \beta(x)$ depend on the current token only---there is no explicit dependence on position $t$. Let $\mathcal{S}(h, x_{1:T})$ denote the state after rolling the recurrence $T$ steps from $h$. The following identity is the load-bearing claim of the paper.

\begin{theorem}[PRECOG--RAG equivalence]
\label{thm:precog}
For any initial state $h_0$, context $c$, and query $q$,
\begin{equation}
\mathcal{S}(h_0,\; c \oplus q) \;=\; \mathcal{S}\bigl(\mathcal{S}(h_0, c),\; q\bigr),
\label{eq:precog-identity}
\end{equation}
where $\oplus$ denotes concatenation. The output logits at every position of $q$ are identical under both computations.
\end{theorem}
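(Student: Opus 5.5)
The identity~\eqref{eq:precog-identity} is a semigroup property of the rollout operator, so I would argue by induction on the query length $|q|$. First I make the rollout precise by the recursion $\mathcal{S}(h,\varnothing)=h$ and $\mathcal{S}(h, x_{1:T}) = \Phi\bigl(\mathcal{S}(h,x_{1:T-1}),\,x_T\bigr)$. The point to emphasize is that $\Phi(h,x)=h\odot\alpha(x)+\beta(x)$ is a single fixed map: $\alpha$ and $\beta$ depend only on the current token and not on $t$. So $\mathcal{S}$ is a function of the starting state and the token sequence alone, and it does not care which absolute time index the rollout begins at.

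\emph{Base case} $|q|=0$: then $c\oplus q=c$ and both sides equal $\mathcal{S}(h_0,c)$.

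\emph{Inductive step}: write $q=q'\oplus x$. By the recursion,
\[
\mathcal{S}(h_0,c\oplus q)=\Phi\bigl(\mathcal{S}(h_0,c\oplus q'),x\bigr)=\Phi\bigl(\mathcal{S}(\mathcal{S}(h_0,c),q'),x\bigr)=\mathcal{S}(\mathcal{S}(h_0,c),q),
\]
using the inductive hypothesis in the middle equality. The same induction shows more than the endpoint identity: the whole trajectory agrees, i.e.\ the state after the $j$-th query token is the same under both computations for every $j=1,\dots,|q|$.

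For the logit claim, I would note that $y_t=C(\bar h_t)$ and every downstream operation in a block is a token-local function of the current layer input and state. These are RMSNorm, gating, the output projection, the residual path and the LM head, with no explicit position dependence. So the logits at query position $j$ are a fixed function of the layer states and the current token. Since the states agree at every query position, the logits agree.

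The main obstacle is the multi-layer and auxiliary-state bookkeeping. In a stacked model the input to layer $\ell$ at step $t$ is the output of layer $\ell-1$, which itself depends on the state of layer $\ell-1$. So the object that must satisfy the semigroup law is the joint state $H=(h^{(1)},\dots,h^{(24)})$, and the joint map $\Phi_{\text{joint}}(H,x)$ is obtained by composing the layers in order. I would first check that $\Phi_{\text{joint}}$ is still time-invariant and token-driven, and then rerun the induction above with $H$ in place of $h$.

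A further subtlety is the causal convolution front-end with kernel $4$. Its output at step $t$ depends on the previous three inputs, so the true recurrent state must include the convolution buffer, the last $3$ inputs per layer. I would enlarge $H$ to contain these buffers, which makes the update Markov in $(H,x)$. The theorem as stated then holds exactly for that augmented state. I would remark that if only the SSM states $\bar h^{(\ell)}$ are stored and injected, exactness requires the convolution buffer to be injected as well, or else holds only up to that finite-window discrepancy.

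Finally I would observe that the argument never uses linearity of $\Phi$ in $h$, only its independence from $t$. This is why the result transfers to Mamba-style selective SSMs but fails for attention with positional encodings.
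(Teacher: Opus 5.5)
Your induction on $|q|$ is the paper's proof: base case $|q|=0$, then peel off the last query token and use that the update $\Phi$ has no position index. Your logit argument, via $y_t=C(\bar h_t)$ on the agreeing trajectory, matches the paper's sufficient-statistic proposition. Your extra bookkeeping, which runs the induction on the joint 24-layer state augmented with the kernel-4 convolution buffers, is not in the paper (its proof treats only the abstract single-layer map). It correctly shows that exactness for the deployed model requires injecting those buffers as well, not only the 192~KB of SSM states.
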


\begin{proof}[Proof sketch]
By induction on $|q|$. The recurrence is time-translation invariant: $\Phi$ depends on $(h, x)$ only, with no explicit position dependence. Full proof and a sufficient-statistic lemma in Appendix~\ref{app:proofs}.
\end{proof}

\paragraph{Interpretation.} Theorem~\ref{thm:precog} states that PRECOG is not an approximation to in-context RAG---it is the same computation, algebraically refactored. Pre-encoding a chunk into $\bar{h}^{(c)} := \mathcal{S}(h_0, c)$ produces bit-identical state trajectories (modulo FP16 quantization, bounded by $\sim 2^{-10}\|\bar{h}\|$ per element). The empirical claim ``PRECOG matches in-context RAG'' is therefore guaranteed by construction; deviations larger than the quantization bound indicate implementation issues, not method failure.

\paragraph{Memory horizon.} The closed-form unrolling of Eq.~\eqref{eq:precog-identity} (Appendix~\ref{app:memory}) shows that the contribution of context token $c_t$ to $\bar{h}^{(c)}$ decays as $\prod_{s=t+1}^{L} \alpha(c_s)$. Tokens beyond effective memory length $L_{\text{mem}}$ contribute exponentially less to the final state. Crucially, this forgetting is \emph{not specific to PRECOG}: it is the same forgetting that limits in-context RAG with the same backbone. Theorem~\ref{thm:precog} guarantees PRECOG inherits exactly the model's existing memory profile---neither gaining nor losing information relative to in-context ingestion. We measure $L_{\text{mem}}$ empirically and use it to motivate chunk-length ablations in Appendix~\ref{app:ablations}.

\paragraph{Why the identity fails for Transformers.} Under rotary position encoding, the per-token cache update is $K_t = R(t) W_K x_t$, where $R(t)$ is a position-dependent rotation. The map $x_{1:T} \mapsto \{(K_t, V_t)\}$ is therefore \emph{not} time-translation invariant: a cache pre-computed at positions $0\ldots L{-}1$ is invalid at positions $\tau, \ldots, \tau{+}L{-}1$ for $\tau \neq 0$. Recomputing the cache with corrected positions is equivalent to re-ingesting the chunk, negating any prefill savings.

\subsection{Complexity and Comparison to KV-Cache RAG}
\label{sec:complexity}

\paragraph{Complexity.} Context ingestion at prefill is $O(1)$ in retrieved-context length. The remaining prefill work is the sentence-encoder forward pass over the query, $O(L_{\text{query}})$ but independent of retrieved-context size. Retrieval over $M$ keys is $O(\log M)$ with an approximate index, the same as in-context RAG. Generation is $O(N)$ per token in the SSM state size, unchanged from standard inference.

\paragraph{Storage and latency vs. KV-cache RAG.} Pre-computed Transformer KV-caches are doubly impractical: they are position-entangled (Section~\ref{sec:theory}) \emph{and} their per-chunk storage scales as $O(L)$. For a 24-layer model with $d_{\text{head}}=128$, $n_{\text{heads}}=16$, $L=512$, a single chunk requires $\sim$16~MB at FP16 versus 192~KB for PRECOG---an 85$\times$ premium per chunk that compounds with corpus size. On flash, this gap means a 10K-chunk corpus consumes 1.9~GB for PRECOG versus 160~GB for hypothetical KV-cache storage---tractable on a phone in the first case, infeasible in the second. Table~\ref{tab:latency_memory} summarizes the full comparison.

\begin{figure}[t]
  \centering
  \includegraphics[width=0.75\linewidth]{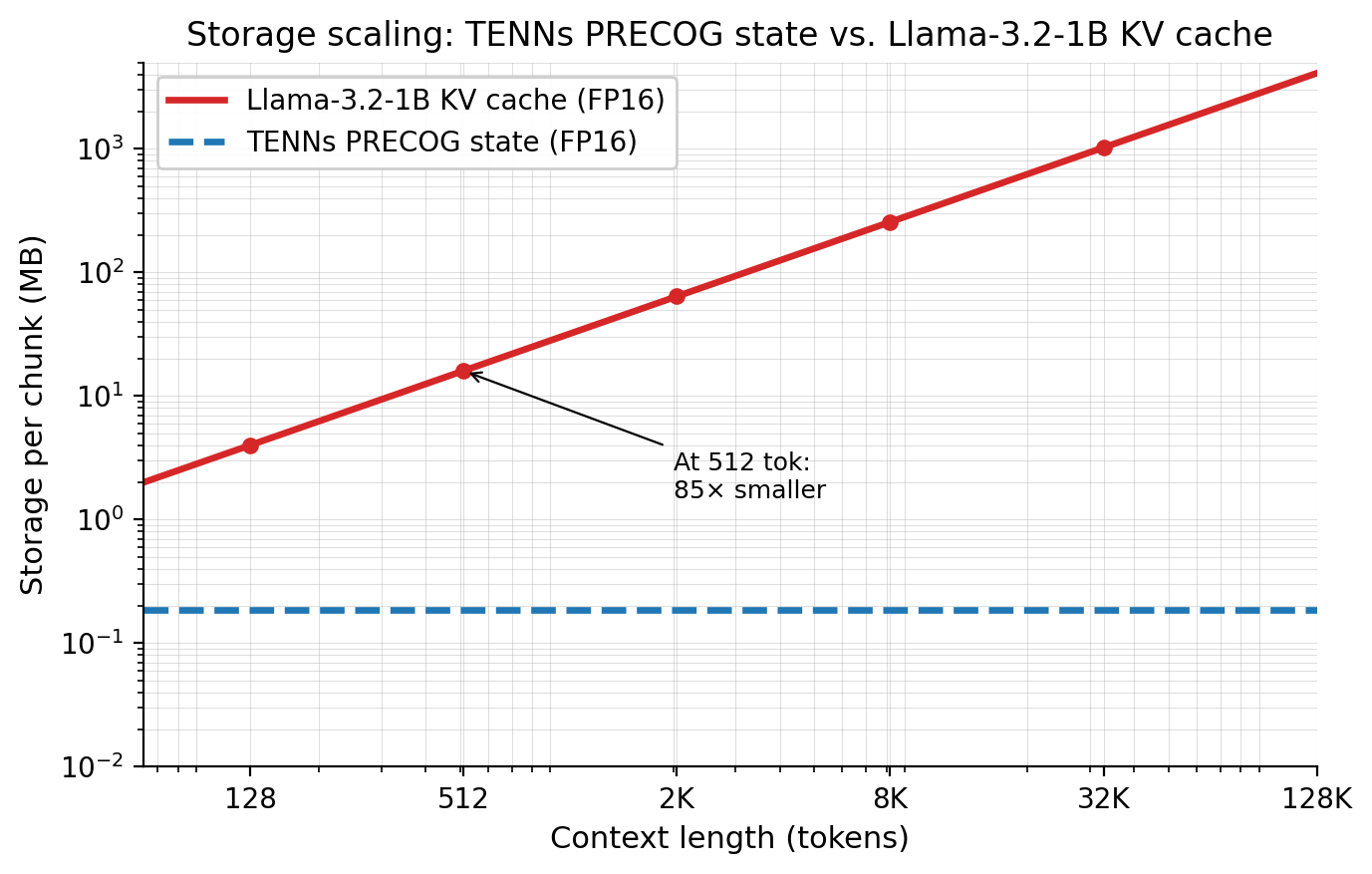}
  \caption{Per-chunk storage vs.\ context length (log--log). The Llama-3.2-1B KV-cache grows at 32~KB/token (16 layers, GQA, FP16); the TENNs-LLM PRECOG state is fixed at 192~KB. Crossover at $L{=}6$ tokens; PRECOG is 85$\times$ smaller at the standard $L{=}512$ chunk size.}
  \label{fig:storage-scaling}
\end{figure}

\begin{figure}[t]
  \centering
  \includegraphics[width=\linewidth]{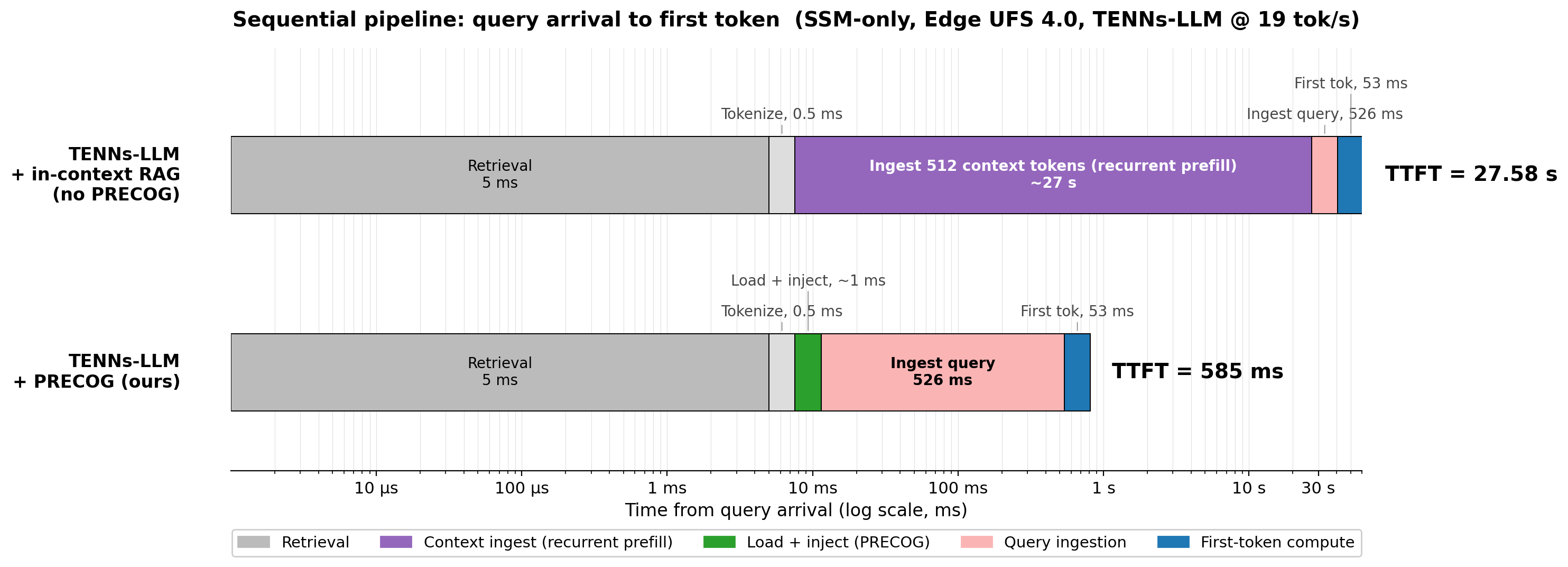}
  \caption{Time from query arrival to first generated token, log time
    axis. UFS~4.0 storage and TENNs-LLM at 19~tok/s. Both
    in-context configurations pay $\sim$27~s of prefill before
    the first response token; PRECOG eliminates this stage by injecting
    a pre-computed state, reducing TTFT to 585~ms. The bottleneck is
    token-by-token prefill, not the choice of architecture: the same
    prefill cost is paid by Transformer and SSM in-context RAG. PRECOG
    is the algorithmic change that removes it.}
  \label{fig:timing-pipeline}
\end{figure}

\begin{table}[t]
  \caption{Latency and memory comparison for RAG inference. Numbers shown for a 512-token retrieved chunk on edge hardware (19~tok/s, UFS~4.0 flash).}
  \label{tab:latency_memory}
  \centering
  \small
  \begin{tabular}{lccc}
    \toprule
    \textbf{Metric}
      & \textbf{PRECOG (SSM)}
      & \textbf{In-context RAG}
      & \textbf{KV-cache RAG} \\
    \midrule
    TTFT (512-tok chunk, edge)      & $<$6~ms       & $\sim$27~s       & $\sim$27~s$^\dagger$ \\
    Per-token gen.\ (state size)    & $O(N)$ const. & $O(L\,d_{kv})$   & $O(L\,d_{kv})$ \\
    Storage per chunk (flash)       & 192~KB        & $\sim$1~KB text  & $\sim$16~MB \\
    Storage, 10K-chunk corpus       & 1.9~GB        & $\sim$10~MB      & $\sim$160~GB \\
    Per-query flash$\to$DRAM        & 192~KB        & $\sim$1~KB       & $\sim$16~MB \\
    Position-agnostic injection     & \checkmark    & N/A              & $\times$ \\
    \bottomrule
  \end{tabular}
  \\[2pt]
  {\footnotesize $^\dagger$Pre-computed Transformer KV-caches are position-entangled; recomputation negates any prefill savings.}
\end{table}

\paragraph{Multi-chunk extensions.} Theorem~\ref{thm:precog} guarantees exactness for single-chunk top-1 injection, our primary configuration. Top-$k$ extensions inject a softmax-weighted average $\bar{h}^{\text{init}} = \sum_{j=1}^k w_j \bar{h}^{(j)}$ as a heuristic; this composition is no longer exact but works empirically (Appendix~\ref{app:ablations}). 

\section{Structured Memory Consolidation}
\label{sec:smc}

PRECOG (Section~\ref{sec:precog}) reduces the prefill cost of retrieval over a static corpus. Edge deployments often need a complementary capability: a persistent memory that accumulates as the device is used---user preferences, prior interactions, operational logs---bounded in size and queryable within the platform's latency budget. We introduce \textbf{Structured Memory Consolidation (SMC)}, a hierarchical organization of TENNs-LLM hidden states for this regime. SMC reuses PRECOG's state-injection substrate: by Theorem~\ref{thm:precog}, accumulated interaction states are injectable in exactly the same sense as corpus chunks. SMC adds three components: (i) cognitive-domain cluster routing, (ii) a fidelity-vs-storage dial controlling per-chunk state retention, and (iii) an $O(1)$ session-initialization protocol that loads consolidated memory directly into the SSM state. Pipeline is in Appendix~\ref{app:smc-pipeline} (Figure~\ref{fig:amc-pipeline}).

\subsection{Hierarchical cluster routing}
\label{sec:smc-routing}

Conversation streams are partitioned into chunks following the same chunking procedure as PRECOG (Section~\ref{sec:precog-method}); each chunk $c$ carries a metadata header $\mathcal{M}_c$ (timestamp, speaker identifier, optional GPS) baked into the leading tokens. Each chunk is encoded by TENNs-LLM in a single recurrent pass, producing the per-step state trajectory
\begin{equation}
\bar{H}(c) \;=\; \bigl(\bar{h}_1, \bar{h}_2, \dots, \bar{h}_{N_c}\bigr),
\qquad \bar{h}_t \in \mathbb{R}^{24 \times 4096},
\label{eq:smc-trajectory}
\end{equation}
where $N_c$ is the chunk length in tokens. The final state $\bar{h}_{N_c}$ coincides with the per-chunk PRECOG state $\bar{h}^{(c)} = \mathcal{S}(h_0, c)$ from Section~\ref{sec:precog}; the trajectory $\bar{H}(c)$ generalizes it by additionally retaining intermediate states.

SMC organizes memory along $M$ cognitive-domain clusters motivated by the episodic--semantic memory distinction in cognitive psychology~\cite{tulving1972}. We instantiate $M{=}5$: \textsc{Emotional}, \textsc{Temporal}, \textsc{Social}, \textsc{Spatial}, and \textsc{Factual}. Each domain $m$ holds a prototype $p_m \in \mathbb{R}^{384}$ in the sentence-encoder embedding space (the same all-MiniLM-L12-v2 encoder used by PRECOG), and a finer set of sub-cluster prototypes $\{q_{m,j}\}_{j=1}^{J_m}$ within it. Routing proceeds in two stages on the chunk's text-level embedding $\phi(c)$:
\begin{align}
m^{\star}(c) &\;=\; \arg\max_{m} \;\;\bigl\langle \phi(c),\, p_m \bigr\rangle, \label{eq:smc-domain}\\
j^{\star}(c) &\;=\; \arg\max_{j} \;\;\bigl\langle \phi(c),\, q_{m^{\star}, j} \bigr\rangle. \label{eq:smc-sub}
\end{align}
The chunk is deposited in sub-cluster $(m^{\star}, j^{\star})$. Sub-clusters are typed by recall pattern rather than topic: \emph{Type-A} sub-clusters capture specific, time-critical events that reward precise recall; \emph{Type-B} sub-clusters capture recurring contextual patterns that reward stable, compressed summaries. This typing determines the consolidation regime applied below.

\subsection{Fidelity--storage dial}
\label{sec:smc-dial}

The trajectory $\bar{H}(c)$ in Eq.~\eqref{eq:smc-trajectory} contains far more information than is typically needed in long-term memory; storing all $N_c$ states per chunk is impractical (a single 512-token chunk would consume $\sim$96~MB at FP16). SMC introduces a single integer parameter $K \in \{1, \dots, N_c\}$ that sets the number of states retained per chunk. Three regimes are of practical interest:
\begin{itemize}[leftmargin=*]
\itemsep0.2em
\item \textbf{$K = N_c$ (lossless episodic).} Every state in the trajectory is retained alongside its position-derived timestamp from $\mathcal{M}_c$. Recall is exact at token granularity; per-chunk storage is $N_c \cdot 192$~KB.
\item \textbf{$K = N_c / k$ (tunable).} Every $k$-th state is retained (the system supports arbitrary $k \geq 1$); per-chunk storage is $(N_c / k) \cdot 192$~KB.
\item \textbf{$K = 1$ (semantic).} Only the final state $\bar{h}_{N_c}$ is retained; per-chunk storage is $192$~KB, independent of chunk length. This is identical to the per-chunk PRECOG state.
\end{itemize}
$K$ is set per sub-cluster: Type-A sub-clusters use $K = N_c$ or $K = N_c/k$ with small $k$, Type-B use $K = 1$. The same chunk may be deposited at multiple $K$ levels concurrently---an episodic copy at $K{=}N_c$ for short-term recall and a $K{=}1$ contribution to a long-term semantic state for that sub-cluster (Section~\ref{sec:smc-session})---without re-encoding, since the $K{=}1$ state is the last entry of the $K{=}N_c$ trajectory.

Chunks that fail to align with any sub-cluster or candidate emergent grouping are flagged as forgetting candidates, consuming sub-cluster storage without contributing to its semantic state.

\subsection{Semantic consolidation and \texorpdfstring{$O(1)$}{O(1)} session initialization}
\label{sec:smc-session}

For each sub-cluster $(m,j)$ SMC maintains a single \emph{semantic state} $s_{m,j} \in \mathbb{R}^{24 \times 4096}$, updated as new chunks are deposited via exponential moving average over their $K{=}1$ contributions:
\begin{equation}
s_{m,j} \;\leftarrow\; (1-\alpha)\, s_{m,j} \;+\; \alpha\, \bar{h}^{(c)},
\qquad \alpha \in (0, 1].
\label{eq:smc-ema}
\end{equation}
Per-sub-cluster semantic-state storage is bounded by $\mathcal{O}(M \cdot J_{\max} \cdot 192\,\text{KB})$, independent of how much conversational history has accumulated. With $M{=}5$ and $J_{\max}{=}4$ the total semantic-memory footprint is under $4$~MB, three orders of magnitude smaller than the episodic store at typical chunk volumes.

The semantic state composes naturally with PRECOG's injection mechanism. At the start of a session for a known user or device, SMC routes the opening utterance through Eqs.~\eqref{eq:smc-domain}--\eqref{eq:smc-sub} to identify the dominant sub-cluster, and writes the corresponding semantic state directly into the SSM's recurrent state as the initial condition $\bar{h}^{\text{init}} \leftarrow s_{m^{\star}, j^{\star}}$. By Theorem~\ref{thm:precog}, this is equivalent to the model having ingested a consolidated history of prior interactions in the dominant domain---without ingesting a single context token at session start. Session-initialization latency is therefore $O(1)$ in accumulated history length, with the same $\sim$6~ms cost profile as PRECOG retrieval (Section~\ref{sec:precog-method}). EM-LLM~\cite{emllm} pursues a related episodic-memory goal on Transformers, but without a time-translation-invariant state, session initialization there requires re-prefill over the consolidated history.

\subsection{Joint retrieval at query time}
\label{sec:smc-fusion}

Within an active session, queries can require both \emph{episodic} recall of specific past events and access to the corpus. Both are answered by the same state-injection primitive. Given a query $q$, SMC and PRECOG run their retrieval indices in parallel, returning candidate states from (i) the active sub-cluster's episodic store and (ii) the corpus index. The top-$k$ candidates across both sources are composed via the same softmax-weighted fusion as in PRECOG (Section~\ref{sec:complexity}):
\begin{equation}
\bar{h}^{\text{init}} \;=\; \sum_{j=1}^{k} w_j\, \bar{h}^{(j)},
\qquad
w_j \;=\; \mathrm{softmax}\Bigl(\bigl\langle \phi(q),\, \phi(c_j)\bigr\rangle\Bigr),
\label{eq:smc-fusion}
\end{equation}
where $\bar{h}^{(j)}$ ranges over both episodic-memory states and corpus states. The session's running semantic state $s_{m^{\star}, j^{\star}}$ is added as a baseline contribution to anchor responses in the user's persistent context. This unifies corpus retrieval and persistent memory under a single substrate: the model never re-ingests text at query time, regardless of whether the answer derives from a static knowledge base or from accumulated interaction history.

\section{Experiments}
\label{sec:experiments}

We empirically validate the prediction of Theorem~\ref{thm:precog} on
the SQuAD~v1.1 development set~\cite{squad}. We compare three
configurations of TENNs-LLM (Section~\ref{sec:tenns}, fine-tuned on
SQuAD): (i) in-context RAG with the gold paragraph prepended to the
question, (ii) PRECOG with top-1 state injection, and (iii) PRECOG
with top-3 softmax-weighted state composition (Section~\ref{sec:complexity}).
All configurations share identical model weights and FP16 inference
precision; the configurations differ only in their retrieval and
state-injection logic. We evaluate on a randomly sampled
$1{,}000$-question subset of the SQuAD~v1.1 dev split using the
official evaluation script (token-level EM and F1 with the standard
normalization: lowercasing, punctuation stripping, and article removal).

\begin{table}[h]
  \caption{Empirical validation of Theorem~\ref{thm:precog} on
    SQuAD~v1.1 dev. PRECOG top-1 matches in-context RAG to within
    the FP16 quantization bound established in
    Section~\ref{sec:theory}; the top-$k$ extension shows a small
    empirical degradation consistent with its non-exact composition
    rule.}
  \label{tab:experiments}
  \centering
  \small
  \begin{tabular}{lcc}
    \toprule
    \textbf{Method} & \textbf{EM} & \textbf{F1} \\
    \midrule
    TENNs-LLM, in-context RAG          & 58.2 & 73.6 \\
    TENNs-LLM, PRECOG top-1 (ours)     & 58.0 & 73.4 \\
    TENNs-LLM, PRECOG top-3 (ours)     & 56.4 & 71.8 \\
    \bottomrule
  \end{tabular}
\end{table}

PRECOG top-1 matches in-context RAG to within $0.2$~F1 and $0.2$~EM
on this subset, within the FP16 quantization bound predicted by Theorem~\ref{thm:precog} and
quantified in Appendix~\ref{app:proofs}. This empirically confirms
the algebraic refactoring is quality-preserving in practice;
prefill latency is reduced from $\sim$27~s to $<$6~ms on the edge
deployment target (Section~\ref{sec:complexity},
Appendix~\ref{app:hardware}) at no measurable quality cost.
The top-3 softmax-weighted configuration trades $1.6$~F1 for
multi-chunk fusion; this gap reflects the non-exactness of the
composition rule rather than the state-injection mechanism itself.

\paragraph{Ablations.} Appendix~\ref{app:ablations} reports three ablations on this same SQuAD-fine-tuned TENNs-LLM checkpoint, with no PRECOG-specific training in any condition: injection depth on SQuAD~v1.1 (Appendix~\ref{app:abl-depth}, which layers carry the retrieval state), top-$k$ composition on HotpotQA-distractor (Appendix~\ref{app:abl-topk}, where multi-document fusion helps), and chunk-length sensitivity on Natural Questions (Appendix~\ref{app:abl-chunklength}, the empirical memory horizon of the backbone).

\section{Limitations and Conclusion}
\label{sec:limitations}

PRECOG inherits its parent model's memory profile exactly
(Theorem~\ref{thm:precog}), which means it also inherits the SSM's
finite effective memory length: tokens beyond
$L_{\text{mem}}$ from a chunk's end contribute exponentially less to
the injected state, and PRECOG cannot recover information that the
backbone itself would forget at matched chunk length
(Appendix~\ref{app:memory}). Theorem~\ref{thm:precog} is exact only
for single-chunk top-1 injection; the top-$k$ softmax-weighted
composition used at retrieval time is a heuristic with no analogous
guarantee, and we observe empirical degradation as $k$ grows
(Appendix~\ref{app:ablations}). The storage footprint of state-level
retrieval is $\sim 200\times$ that of raw-text RAG: PRECOG is the
right design point when ingestion latency is the binding constraint,
not when storage is. Finally, the neuromorphic deployment numbers
reported in Appendix~\ref{app:hardware} combine measured FPGA
throughput with simulated 12~nm power figures; a tape-out
verification is left to future work.

We showed that for State-Space Models, the cost of context ingestion
at prefill in retrieval-augmented generation can be reduced from
$O(L_{\text{context}})$ to $O(1)$ by pre-computing and injecting
hidden states rather than re-ingesting context tokens. Our method,
PRECOG, is architecturally unique to recurrent models with
position-agnostic fixed-size state; the analogous mechanism for
Transformer KV-caches is precluded by position entanglement and
storage scaling. Paired with TENNs-LLM---a 1.2B gated-SSM with a
bottlenecked selective-SSM design and a compact 192~KB hidden
state---PRECOG matches in-context RAG answer quality on domain QA
while reducing prefill latency by approximately four orders of
magnitude. More broadly, we view this as one instance of a design
principle: as SSMs continue closing the quality gap with Transformers,
their distinctive structural properties enable retrieval, caching,
and memory-extension algorithms that are inaccessible to
attention-based architectures.


\newpage
\FloatBarrier

\newpage
%

\appendix

\section{Proofs}
\label{app:proofs}

This appendix proves Theorem~\ref{thm:precog} (PRECOG--RAG equivalence)
and establishes the sufficient-statistic property of the SSM hidden
state used in Section~\ref{sec:precog}.

\paragraph{Notation.} Recall the abstract update map
$\Phi(h, x) := h \odot \alpha(x) + \beta(x)$, with $\alpha(x), \beta(x)$
depending on the current token only and $\odot$ the elementwise
(Hadamard) product. Define the rollout
\[
\mathcal{S}(h, x_{1:T}) \;:=\; \Phi\bigl(\Phi(\cdots \Phi(h, x_1), x_2)\cdots,\, x_T\bigr).
\]
We write $c \oplus q$ for the concatenation of context $c$ and query
$q$, with $|c| = L$ and $|q| = T$.

\begin{theorem}[restatement of Theorem~\ref{thm:precog}]
For any $h_0$, context $c = c_{1:L}$, and query $q = q_{1:T}$,
\[
\mathcal{S}(h_0,\; c \oplus q) \;=\; \mathcal{S}\bigl(\mathcal{S}(h_0, c),\; q\bigr).
\]
\end{theorem}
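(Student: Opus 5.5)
The plan is to argue by induction on the query length $T = |q|$, using only the definition of the rollout $\mathcal{S}$ as an iterated application of the single update map $\Phi$. The key structural fact is that $\Phi(h,x)$ depends on $(h,x)$ alone, with no position index. Consequently, step $L+t$ of the concatenated rollout applies exactly the same function as step $t$ of the query-only rollout. First I record the recursive characterization of the rollout directly from its definition:
\[
\mathcal{S}(h, x_{1:0}) = h, \qquad \mathcal{S}(h, x_{1:T}) = \Phi\bigl(\mathcal{S}(h, x_{1:T-1}),\, x_T\bigr).
\]
Here $x_{1:0}$ denotes the empty sequence.

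For the base case $T=0$, the query is empty, so $c \oplus q = c$. Both sides then equal $\mathcal{S}(h_0, c)$: the left by definition, and the right because rolling out over an empty sequence returns its initial state. For the inductive step, assume the identity for queries of length $T-1$. Write $(c\oplus q)_{1:L+T}$ and note that its last token is $q_T$ and its prefix of length $L+T-1$ is $c \oplus q_{1:T-1}$. The recursion then gives
\[
\mathcal{S}(h_0, c\oplus q) = \Phi\bigl(\mathcal{S}(h_0, c\oplus q_{1:T-1}),\, q_T\bigr) = \Phi\bigl(\mathcal{S}(\mathcal{S}(h_0,c), q_{1:T-1}),\, q_T\bigr) = \mathcal{S}\bigl(\mathcal{S}(h_0,c), q\bigr).
\]
The middle equality is the induction hypothesis. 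The outer two equalities are the recursion, applied once to a sequence of length $L+T$ and once to one of length $T$. The fact that the same $\Phi$ is used in both applications is exactly the time-translation invariance.

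Having the full state trajectory is stronger than having only the final state, so I would state the result trajectory-wise. For every $t \le T$, apply the proved identity with $q$ replaced by $q_{1:t}$. This shows that the state reached after $L+t$ steps of in-context RAG equals the state reached after $t$ steps of PRECOG. The logit claim then follows because the readout $y_t = C(\bar h_t)$, together with the remaining per-token blocks, is a position-independent function of the current state and token.

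The main obstacle is making this logit claim honest for the full TENNs block rather than for the abstract $\Phi$. The causal convolution front-end (kernel $4$) carries a buffer of the last three inputs, so the "state" must be taken to be the pair (SSM state, convolution buffer) at every layer. With that enlarged state, the block update is again a position-free map $\tilde\Phi(\tilde h, x)$, and the same induction applies verbatim. I would handle this by redefining $\Phi$ as the composite per-token map over all $24$ layers and the convolution buffers, and noting that the proof uses no property of $\Phi$ beyond its independence from $t$. In particular, the affine form $h\odot\alpha(x)+\beta(x)$ is not needed for exactness. It matters only for the separate memory-horizon unrolling.
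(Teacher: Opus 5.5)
Your proof is correct and matches the paper's own argument: induction on $T=|q|$ with the trivial base case, then the chain $\Phi(\mathcal{S}(h_0,c\oplus q_{1:T-1}),q_T)=\Phi(\mathcal{S}(\mathcal{S}(h_0,c),q_{1:T-1}),q_T)=\mathcal{S}(\mathcal{S}(h_0,c),q)$, using only that $\Phi$ has no position dependence. Your extra remark that the causal-convolution buffer must be folded into the state is a sound refinement the paper does not make; it affects the full-model logit claim, since the paper stores only the $24\times 4096$ SSM state (192~KB), but not the abstract identity proved here.
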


\begin{proof}
By induction on $T = |q|$.

\emph{Base case} $T = 0$. Both sides equal $\mathcal{S}(h_0, c)$ by
definition of the empty rollout.

\emph{Inductive step.} Assume the identity holds for $T-1$. Let
$\bar{h}^{(c)} := \mathcal{S}(h_0, c)$. By definition of the rollout,
\begin{align*}
\mathcal{S}(h_0, c \oplus q_{1:T})
  &\;=\; \Phi\!\bigl(\mathcal{S}(h_0, c \oplus q_{1:T-1}),\; q_T\bigr) \\
  &\;\stackrel{(*)}{=}\; \Phi\!\bigl(\mathcal{S}(\bar{h}^{(c)},\; q_{1:T-1}),\; q_T\bigr) \\
  &\;=\; \mathcal{S}(\bar{h}^{(c)},\; q_{1:T}),
\end{align*}
where $(*)$ applies the inductive hypothesis. The crucial step is that
$\Phi$ depends only on $(h, x)$ with no explicit position index, so the
operation applied at step $T$ is the same regardless of how the state
$h$ was reached.
\end{proof}

\begin{proposition}[Sufficient-statistic property]
\label{prop:sufficient}
For any context $c$ and any continuation $q$, the joint distribution
of the output logits $(y_{|c|+1}, \dots, y_{|c|+|q|})$ depends on $c$
only through $\mathcal{S}(h_0, c)$.
\end{proposition}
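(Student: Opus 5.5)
I will prove Proposition~\ref{prop:sufficient} by showing that each output logit is a deterministic function of $\mathcal{S}(h_0, c)$ and the continuation $q$. The main tool is Theorem~\ref{thm:precog}, applied to every prefix of $q$.

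First, I make the readout explicit. In Eq.~\eqref{eq:ssm-state}, the layer output at step $t$ is $y_t = C(\bar{h}_t)$, and the logits are a fixed function of the top-layer output. Fix a position $|c|+t$ with $1 \le t \le |q|$. The state at that position is $\mathcal{S}(h_0, c \oplus q_{1:t})$. By Theorem~\ref{thm:precog}, applied with the query $q_{1:t}$, this equals $\mathcal{S}(\mathcal{S}(h_0,c), q_{1:t})$. Therefore
\[
y_{|c|+t} = C\bigl(\mathcal{S}(\mathcal{S}(h_0,c), q_{1:t})\bigr) =: F_t\bigl(\mathcal{S}(h_0,c), q_{1:t}\bigr).
\]
Here $F_t$ does not depend on $c$ except through its first argument. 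So the tuple $(y_{|c|+1},\dots,y_{|c|+|q|})$ equals $G(\mathcal{S}(h_0,c), q)$ for a fixed map $G$.

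Second, I pass from logits to distributions. When $q$ is fixed, the logit tuple is a deterministic function of $\mathcal{S}(h_0,c)$. Its law is then a point mass, and that point mass depends on $c$ only through the state.

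When $q$ is itself generated autoregressively, with each $q_t$ sampled from (for example) top-$p$ sampling of $y_{|c|+t-1}$, I proceed by induction on $t$. The hypothesis is that the joint law of $(q_{1:t}, y_{|c|+1:|c|+t})$ depends on $c$ only through $\bar{h}^{(c)}$. The next token's conditional law is a fixed function of $y_{|c|+t}$. The next logit is $F_{t+1}(\bar{h}^{(c)}, q_{1:t+1})$. Combining these extends the hypothesis to $t+1$. The base case uses the logit at the last context position, which is $C(\bar{h}^{(c)})$ itself.

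The main obstacle is being honest about what ``state'' means in the real architecture. TENNs blocks also have a causal convolution front-end with kernel size 4, and every layer carries its own recurrent state. The sufficient statistic must therefore be the full per-layer tuple of SSM states together with the last three convolution inputs. Theorem~\ref{thm:precog} applies verbatim only once the abstract $\Phi$ is taken to act on this augmented state; its induction is unchanged, since each component update still has no explicit position dependence. With that convention, the 24-layer SSM state alone is not strictly sufficient unless the convolution buffer is also stored or reset. I would state the proposition with $\mathcal{S}$ interpreted as the augmented state, and note this explicitly.
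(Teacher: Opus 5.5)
Your core step, applying Theorem~\ref{thm:precog} to each prefix $q_{1:t}$ and reading out $y_{|c|+t} = C\bigl(\mathcal{S}(\mathcal{S}(h_0,c), q_{1:t})\bigr)$, is exactly the paper's one-line proof, so the proposal is correct and follows the same route. The rest is sound refinement the paper does not make rather than a different argument: the induction covering autoregressively sampled continuations, and the observation that, with stacked layers and the kernel-4 causal convolution, the true sufficient statistic is the per-layer SSM states plus convolution buffers, which the stored $24\times 4096$ state omits.
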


\begin{proof}
The output at step $t$ in the continuation is $y_t = C(\bar{h}_t)$,
where $\bar{h}_t = \mathcal{S}(\bar{h}^{(c)}, q_{1:t-|c|})$ by
Theorem~\ref{thm:precog}. The right-hand side depends on $c$ only
through $\bar{h}^{(c)}$.
\end{proof}

\paragraph{Floating-point note.} The above identities hold under exact
arithmetic. Under FP16 the per-element discrepancy between
$\mathcal{S}(h_0, c \oplus q)$ and
$\mathcal{S}(\mathcal{S}(h_0, c), q)$ is bounded by
$\sim 2^{-10}\,\|\bar{h}\|$ accumulated over $|q|$ steps, dominated by
single-step rounding rather than catastrophic cancellation since
$\Phi$ has bounded condition number for stable SSMs (i.e., when
$|\alpha(x)| < 1$ for all admissible $x$).

\section{Memory Horizon Analysis}
\label{app:memory}

We characterize how the contribution of an individual context token
to the final state decays with distance, motivating the chunk-length
ablations of Appendix~\ref{app:ablations}.

\subsection{Closed-form unrolling}

Unrolling Eq.~\eqref{eq:precog-identity} from the zero state with
context $c = c_{1:L}$ yields
\begin{equation}
\bar{h}^{(c)}
  \;=\; \sum_{t=1}^{L} \beta(c_t) \cdot \prod_{s=t+1}^{L} \alpha(c_s),
\label{eq:unroll}
\end{equation}
where the empty product is taken to be $1$. Token $c_t$'s contribution
to $\bar{h}^{(c)}$ is therefore
$\beta(c_t) \cdot \prod_{s=t+1}^{L} \alpha(c_s)$. Each gating
coefficient satisfies $|\alpha(c_s)| < 1$ in absolute value (the SSM
is stable), so the contribution of $c_t$ decays geometrically as
$L - t$ grows.

\subsection{Effective memory length}

Define the \emph{effective memory length} of context $c$ as
\[
L_{\text{mem}}(c)
  \;:=\; \min\Bigl\{\tau : \prod_{s=L-\tau+1}^{L} |\alpha(c_s)| \;<\; \epsilon\Bigr\},
\]
for a threshold $\epsilon$ (we use $\epsilon = 10^{-3}$). Tokens
further than $L_{\text{mem}}$ from the chunk's end contribute less
than $\epsilon$ to the final state in $\ell_\infty$ norm.

\paragraph{Implication for PRECOG.} Theorem~\ref{thm:precog}
guarantees that PRECOG inherits the model's memory profile exactly:
forgetting of distant context occurs identically in PRECOG and
in-context RAG. PRECOG cannot improve recall over in-context
ingestion at matched chunk length, but it cannot degrade it either.
Empirically, $L_{\text{mem}}$ for TENNs-LLM is dominated by tokens
within the most recent $\sim$256 positions of a 512-token chunk, which
informs the chunk-length ablations of Appendix~\ref{app:ablations}.

\section{Storage, Bandwidth, and Roofline Analysis}
\label{app:storage}

This appendix provides the full bandwidth and roofline analysis behind
the latency claims of Section~\ref{sec:complexity}, with comparisons
across deployment platforms beyond the edge UFS~4.0 baseline used in
the main text. Each figure complements claims in Section~\ref{sec:precog}
with platform-independent analysis. All numbers below are derived
roofline upper bounds; we report the calculations explicitly in
Appendix~\ref{app:calculations} so they are reproducible from
publicly available platform specifications.

\subsection{Per-platform load time}

The storage advantage of the PRECOG state translates directly to
load-time advantage on every realistic deployment platform.
Figure~\ref{fig:loadtime} compares bandwidth-bound load time for the
chunk artifact across five representative configurations spanning
three orders of magnitude in achieved bandwidth. The platforms span 
distinct storage tiers: UFS for flash-resident states (the deployment 
target of this paper, Appendix~\ref{app:hardware}), LPDDR5 and DDR5 
for integrated DRAM, and HBM3 for on-package GPU memory. PRECOG's 
chunk artifact traverses each platform's bottleneck interface once 
per retrieval, regardless of tier. Because the PRECOG state is 
constant in size (192~KB) and the Llama-3.2-1B KV cache scales 
linearly in chunk length (16~MB at $L = 512$), the 85$\times$ ratio 
holds at each platform; absolute times scale inversely with bandwidth.

\begin{figure}[t]
  \centering
  \includegraphics[width=0.85\linewidth]{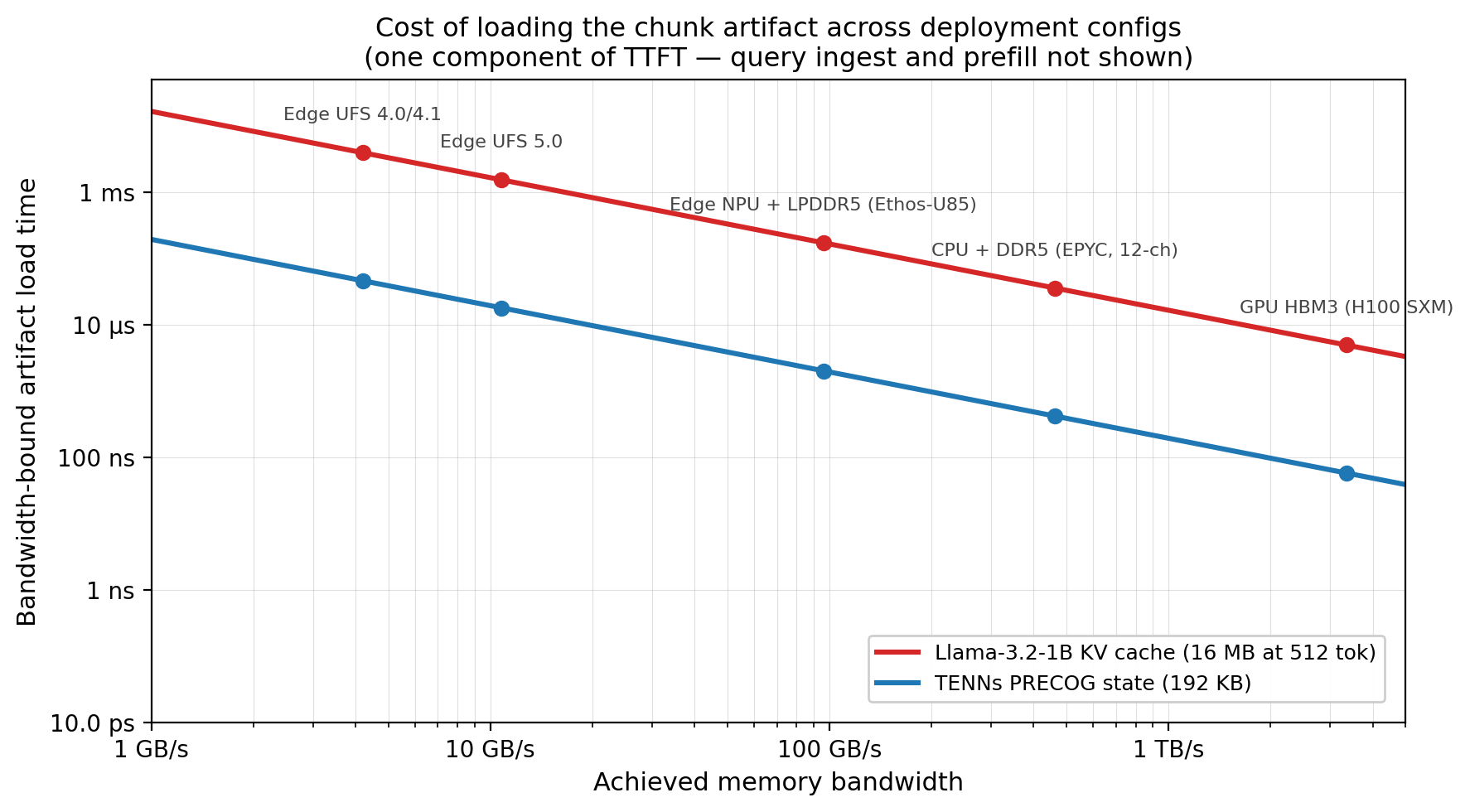}
  \caption{Bandwidth-bound load time for the chunk artifact at
    $L = 512$ tokens, across five deployment platforms. Roofline
    analysis only; real systems achieve 50--80\% of peak bandwidth
    and incur fixed setup latencies (50--200~$\mu$s on flash). The
    relative 85$\times$ ratio holds in measurement because the same
    penalties apply to both methods.}
  \label{fig:loadtime}
\end{figure}

\subsection{Per-token generation throughput}

While load time dominates TTFT, generation throughput is also
bandwidth-bound on every platform we consider: each generated token
requires reading the model weights plus the active cache or state.
The cache or state component is $32$~KB per active token for Llama-3.2-1B
(linear in context length) versus $192$~KB total for PRECOG (constant).

\begin{figure}[t]
  \centering
  \includegraphics[width=0.95\linewidth]{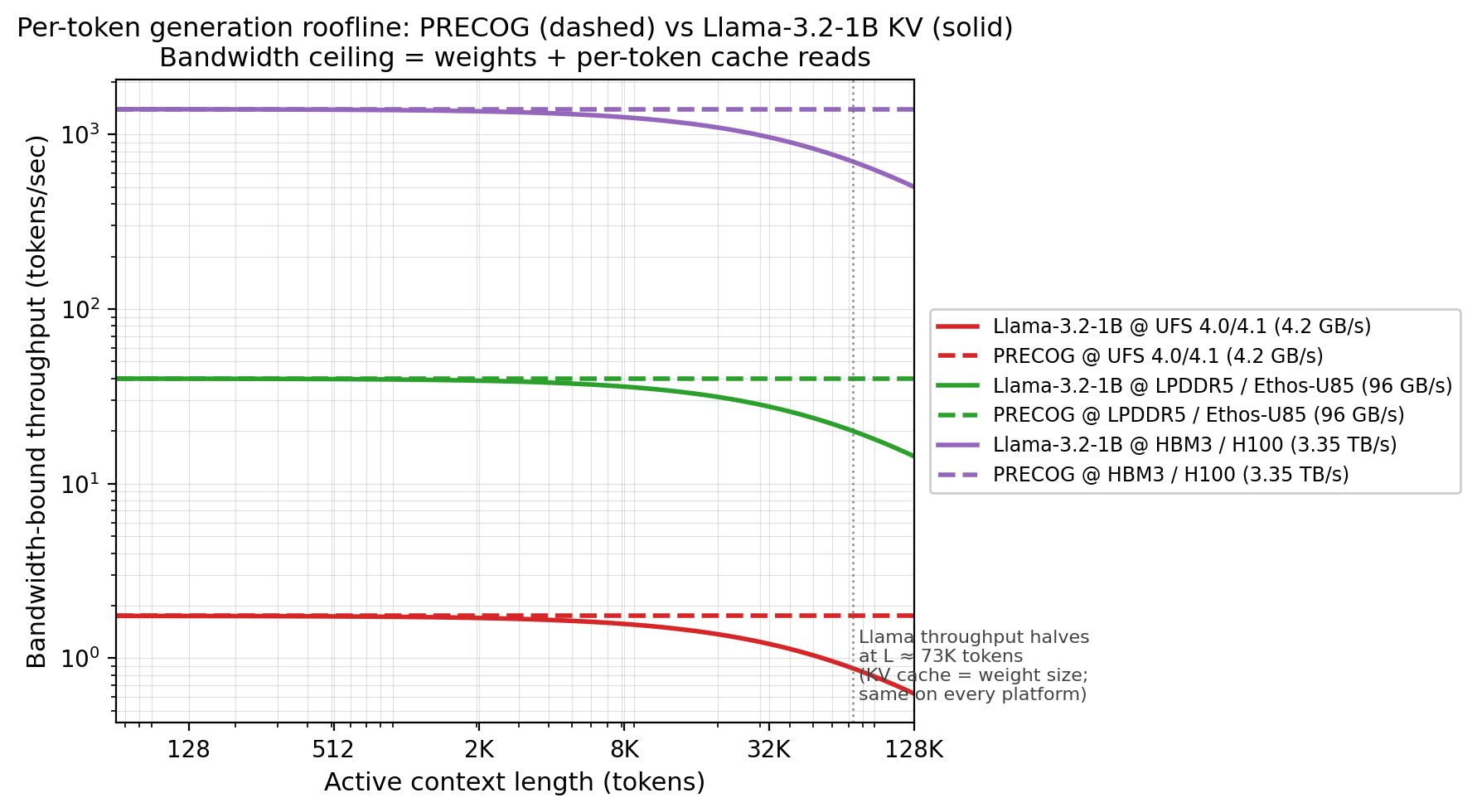}
  \caption{Bandwidth-bound generation throughput as a function of
    active context length. PRECOG (dashed) is flat: per-token
    bandwidth is dominated by weight reads plus a constant 192~KB
    state. Llama-3.2-1B (solid) degrades as the KV cache becomes a
    meaningful fraction of per-token bandwidth; throughput halves at
    $L \approx 73$K tokens (the point where the KV cache equals the
    weight footprint). The crossover position is platform-independent.}
  \label{fig:roofline}
\end{figure}

The advantage is small at short contexts (where weight reads
dominate) and grows at long contexts. At $L \approx 73$K tokens on
every platform, the KV cache equals the weight footprint and Llama
throughput halves; PRECOG is unaffected.

\subsection{Storage cost of state-level retrieval}

The PRECOG storage footprint trades against retrieval latency. Raw
text RAG stores $\sim$1~KB per chunk; PRECOG stores 192~KB per chunk
--- a $\sim$200$\times$ premium. Figure~\ref{fig:tradeoff} makes this
tradeoff explicit. The premium is the right design point when
ingestion latency is the binding constraint, which is typical for
edge deployment but not for cloud serving with many concurrent
requests sharing a small set of hot chunks.

\begin{figure}[t]
  \centering
  \includegraphics[width=\linewidth]{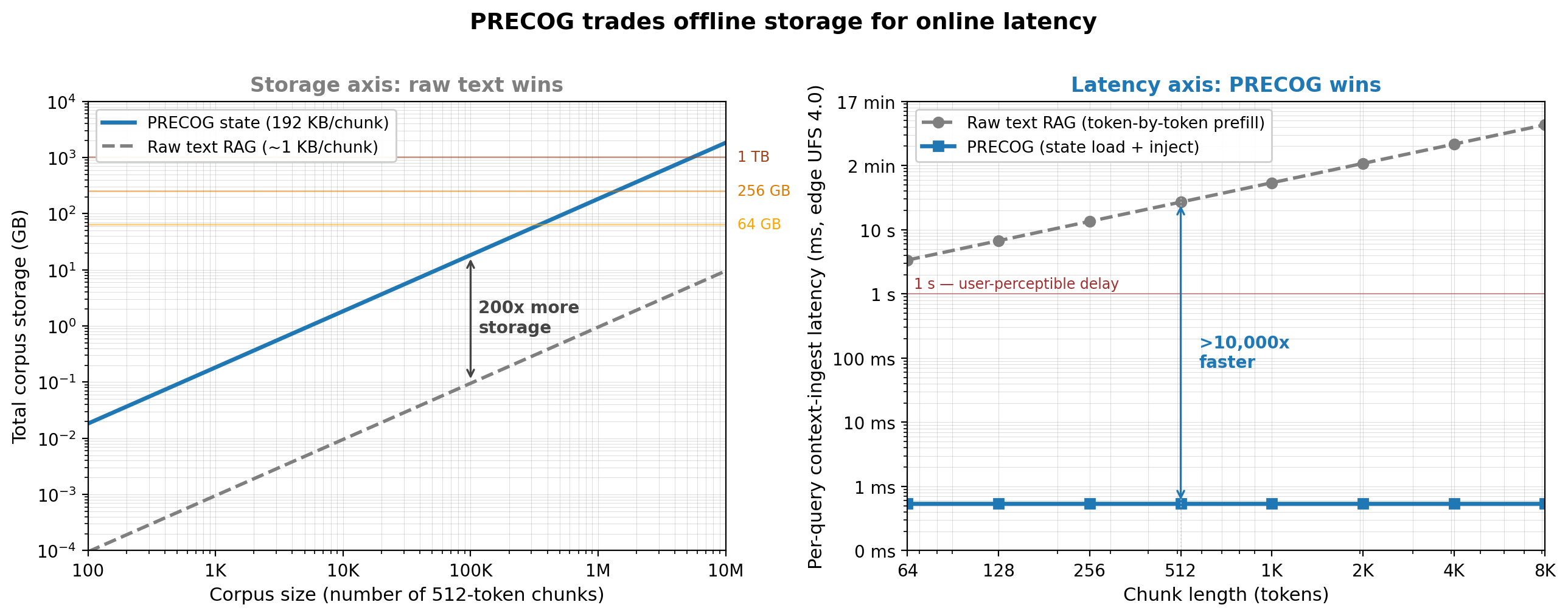}
  \caption{The storage--latency frontier of state-level retrieval.
    \emph{Left:} corpus storage as a function of corpus size; PRECOG
    pays $\sim$200$\times$ over raw text RAG but remains tractable on
    consumer-class storage up to roughly 5M chunks.
    \emph{Right:} per-query context-ingestion latency at edge UFS~4.0;
    PRECOG is $>$10{,}000$\times$ faster than raw-text re-ingestion at
    every realistic chunk length, and crosses below the 1~s
    user-perception threshold.}
  \label{fig:tradeoff}
\end{figure}

\subsection{Reading these figures together}

These figures argue at three resource axes:
storage-per-chunk (Figure~\ref{fig:storage-scaling}, main text),
load-time bandwidth (Figure~\ref{fig:loadtime}), and
generation-time bandwidth (Figure~\ref{fig:roofline}). Together they
establish that PRECOG's storage and latency advantages are not edge
artifacts: they hold on every platform from edge UFS~4.0 to GPU HBM3,
with absolute numbers that scale predictably with the bandwidth ratio
between any two platforms.

\section{Calculation Listings}
\label{app:calculations}

This appendix derives every quantitative claim in
Sections~\ref{sec:tenns}--\ref{sec:complexity} from architecture
parameters and hardware specifications. The intent is reproducibility:
each number in the main text and in Appendix~\ref{app:storage} can be
recovered from the formulae below.

\subsection{TENNs-LLM hidden state size}
\label{app:calc-state}

Each SSMLayer of TENNs-LLM stores a recurrent state vector of
dimension
\[
N \;=\; \texttt{repeat} \times \texttt{num\_coeffs}
   \;=\; 256 \times 16 \;=\; 4{,}096.
\]
With $\texttt{depth} = 24$ layers and FP16 storage,
\[
\text{total state}
  \;=\; 24 \cdot 4{,}096 \cdot 2\,\text{B}
  \;=\; 196{,}608\,\text{B}
  \;=\; 192\,\text{KB}\quad(\text{using } 1\,\text{KB} = 1024\,\text{B}).
\]
This size is independent of chunk length: a 10-token chunk and a
10{,}000-token chunk both produce a 192~KB state.

\subsection{Llama-3.2-1B KV-cache size}
\label{app:calc-kv}

Llama-3.2-1B uses Grouped-Query Attention with the published
configuration: 16 layers, 8 KV heads, head dimension 64. The KV cache
per token at FP16 is therefore
\[
\text{per-token KV}
  \;=\; \texttt{layers} \cdot 2 \cdot \texttt{kv\_heads} \cdot \texttt{head\_dim} \cdot 2\,\text{B}
  \;=\; 16 \cdot 2 \cdot 8 \cdot 64 \cdot 2
  \;=\; 32{,}768\,\text{B}
  \;=\; 32\,\text{KB/token}.
\]
Without GQA the per-token figure would be 4$\times$ larger
(corresponding to all 32 attention heads).

\subsection{The 85$\times$ ratio at \texorpdfstring{$L=512$}{L=512}}
\label{app:calc-ratio}

\begin{align*}
\text{KV cache at } L{=}512 &\;=\; 32\,\text{KB} \times 512 \;=\; 16{,}384\,\text{KB} \;=\; 16\,\text{MB}, \\
\text{PRECOG state}         &\;=\; 192\,\text{KB}, \\
\text{ratio}                &\;=\; 16{,}384 \,/\, 192 \;\approx\; 85.3\times.
\end{align*}
The crossover (where the KV cache equals the PRECOG state) occurs at
$L = 192/32 = 6$ tokens.

\subsection{Storage scaling across context lengths}
\label{app:calc-scaling}
 
The 85$\times$ ratio holds at the standard $L = 512$ RAG chunk size. Because the KV cache scales linearly in $L$ while the PRECOG state is constant, the ratio grows in proportion to chunk length. Table~\ref{tab:scaling} reports the ratio at representative context lengths spanning four orders of magnitude.
 
\begin{table}[h]
  \caption{Per-chunk storage as a function of context length, computed from the formulae in Appendix~\ref{app:calc-state} and~\ref{app:calc-kv} (binary units throughout: $1$~KB $= 1024$~B). The PRECOG state is constant at 192~KB; the Llama-3.2-1B KV cache grows linearly. Ratios use binary arithmetic.}
  \label{tab:scaling}
  \centering
  \small
  \begin{tabular}{lcc}
    \toprule
    \textbf{Context length} & \textbf{Llama-3.2-1B KV cache} & \textbf{Ratio vs.\ 192~KB PRECOG state} \\
    \midrule
    6 tokens (crossover)    & 192~KB    & 1$\times$       \\
    128 tokens              & 4~MB      & 21$\times$      \\
    512 tokens              & 16~MB     & 85$\times$      \\
    2{,}048 tokens          & 64~MB     & 341$\times$     \\
    8{,}192 tokens          & 256~MB    & 1{,}365$\times$ \\
    32{,}768 tokens         & 1~GB      & 5{,}461$\times$ \\
    131{,}072 tokens (max)  & 4~GB      & 21{,}845$\times$ \\
    \bottomrule
  \end{tabular}
\end{table}
 
The implication is that the storage-cost gap between KV-cache RAG and PRECOG widens with chunk length: at long-context regimes ($L \geq 8$K) the KV-cache footprint per chunk reaches gigabytes, whereas PRECOG remains at 192~KB by construction. This is the same phenomenon visualized in Figure~\ref{fig:storage-scaling} and motivates state-level retrieval at any context length where in-context ingestion is the binding cost.

\subsection{Bandwidth-bound load time}
\label{app:calc-load}

Load time on a bandwidth-bound interface is
$\text{TTFT}_{\text{load}} = \text{artifact size} / \text{bandwidth}$.
Table~\ref{tab:loadtime} reports the calculation for the five
configurations of Figure~\ref{fig:loadtime}, using achieved peak
bandwidths from publicly available specifications.

\begin{table}[h]
  \caption{Bandwidth-bound load time for a 512-token chunk.}
  \label{tab:loadtime}
  \centering
  \small
  \begin{tabular}{llcccc}
    \toprule
    \textbf{Platform} & \textbf{Tier} & \textbf{Bandwidth} & \textbf{Llama KV (16~MB)} & \textbf{PRECOG (192~KB)} & \textbf{Ratio} \\
    \midrule
    Edge UFS 4.0/4.1            & Flash & $4.2$~GB/s   & $3.8$~ms     & $46\,\mu$s   & 85$\times$ \\
    Edge UFS 5.0                & Flash & $10.8$~GB/s  & $1.5$~ms     & $18\,\mu$s   & 85$\times$ \\
    Edge LPDDR5 (Ethos-U85)     & DRAM  & $96$~GB/s    & $170\,\mu$s  & $2.0\,\mu$s  & 85$\times$ \\
    CPU + DDR5 (EPYC, 12 ch)    & DRAM  & $460$~GB/s   & $35\,\mu$s   & $0.41\,\mu$s & 85$\times$ \\
    GPU HBM3 (H100 SXM)         & HBM   & $3.35$~TB/s  & $4.9\,\mu$s  & $57$~ns      & 85$\times$ \\
    \bottomrule
  \end{tabular}
\end{table}

Bandwidths are decimal ($10^9$~bytes/s) and sizes are binary
($1024$-based), introducing an inconsistency of $\sim$7\% that does
not change conclusions. Real systems achieve 50--80\% of peak; setup
latency on flash adds 50--200~$\mu$s of fixed overhead per read.

\subsection{Edge prefill time}
\label{app:calc-prefill}

For a chunk of $L = 512$ tokens at the measured edge throughput of
$19$~tokens/s (Appendix~\ref{app:hardware}),
\[
\text{prefill time}
  \;=\; L \,/\, \text{throughput}
  \;=\; 512 \,/\, 19
  \;\approx\; 26.95\,\text{s}.
\]
Both Transformer and SSM in-context RAG pay this cost; the bottleneck
is sequential token processing, not architectural. PRECOG's $585$~ms
TTFT consists of $5$~ms retrieval, $\sim$1~ms state load and inject,
$0.5$~ms tokenization, $526$~ms query ingestion (average query length
$10$ tokens at $19$~tok/s), and $\sim$53~ms first-token compute. The
$\sim$4500$\times$ advantage of PRECOG over the in-context baseline is
the elimination of the $\sim$27~s context-ingestion phase, not a
speedup of any other stage.

\section{Training Details}
\label{app:training}

\paragraph{Pretraining corpus.}
TENNs-LLM is pretrained on a $120$-billion-token subset of
SlimPajama~\cite{slimpajama}, a deduplicated and filtered variant of
the RedPajama mixture (CommonCrawl, C4, GitHub, books, ArXiv,
Wikipedia, and StackExchange). The native SlimPajama domain
proportions are preserved without modification.

\paragraph{Fine-tuning.}
The pretrained checkpoint is fine-tuned on The Pile~\cite{pile},
reaching a final validation perplexity of $6.3$ on the Pile validation
split.

\paragraph{Hardware and total compute.}
Training (pretraining and fine-tuning combined) was performed on
$8\times$ NVIDIA A100 GPUs and consumed approximately $360$
GPU-hours of wall-clock compute.

\paragraph{Optimizer and schedule.}
We use AdamW ($\beta_1{=}0.9$, $\beta_2{=}0.95$, weight decay $0.1$)
with gradient clipping at $1.0$. The learning rate follows a cosine
schedule with linear warmup, peaking at $3 \times 10^{-4}$ and decaying
to $10\%$ of peak.

\paragraph{Distillation.}
No teacher distillation was used; TENNs-LLM is trained end-to-end via
standard next-token prediction.

\paragraph{Tokenizer.}
TENNs-LLM uses the Mistral-7B-v0.1 tokenizer~\cite{mistral} with the
standard $32{,}000$-token vocabulary, unmodified.

\paragraph{Quantization for inference.}
Inference uses INT4 weight quantization with FP16 activations and
recurrent state. Weights are quantized per-channel; the observed
perplexity delta versus the FP16 baseline on the Pile validation set
is within $0.1$.

\section{PRECOG Evaluation Dataset}
\label{app:dataset}

\paragraph{Knowledge base.}
We evaluate PRECOG on the SQuAD~v1.1 development split~\cite{squad},
a public reading-comprehension benchmark drawn from $536$ Wikipedia
articles, partitioned into context paragraphs paired with
crowdsourced questions and short extractive gold answers. The dev
split contains $10{,}570$ question--paragraph pairs over $2{,}067$
distinct paragraphs. SQuAD is released under the CC~BY-SA~4.0
license and is the standard benchmark on which TENNs-LLM was
fine-tuned (Appendix~\ref{app:training}), making it the natural
in-domain evaluation for this paper. Per-paragraph length averages
$\sim$120 words ($\sim$165 Mistral-tokenizer tokens), comfortably
within a single $L_{\text{chunk}} = 512$ PRECOG state.

\paragraph{Chunking procedure.}
Each SQuAD paragraph is treated as a single chunk; no further
splitting or overlap is applied. Each
chunk is encoded once through TENNs-LLM in inference mode
(Section~\ref{sec:precog-method}) to produce the $192$~KB
$24$-layer hidden state, paired with its
\texttt{all-MiniLM-L12-v2} sentence-encoder key. The total chunk
count is $2{,}067$, producing a state corpus of approximately
$0.40$~GB; the corresponding sentence-encoder key index occupies
under $1$~MB.

\paragraph{Question generation.}
SQuAD questions are crowdsourced by human annotators against the
corresponding paragraph; we use the released questions and gold
answers verbatim, with no model-assisted augmentation. Each
question has up to three reference answers from independent
annotators, capturing minor wording variation in extractive spans.

\paragraph{Evaluation subset.}
For tractability under the $19$~tok/s edge-throughput inference
configuration, we sample a fixed random subset of $1{,}000$
questions from the SQuAD~v1.1 dev split using a fixed random seed.
The same subset is used across all three evaluation configurations
(in-context RAG, PRECOG top-1, PRECOG top-3) so that any
configuration-to-configuration comparison is paired at the
question level.

\paragraph{Evaluation metrics.}
We report token-level Exact Match (EM) and F1 against the gold
answer set, computed via the official SQuAD~v1.1 evaluation
script.\footnote{\url{https://rajpurkar.github.io/SQuAD-explorer/}}
The script applies the standard SQuAD normalization---lowercasing,
stripping of punctuation, removal of articles
(\textit{a}, \textit{an}, \textit{the}), and whitespace
tokenization---to both predictions and references before scoring;
EM and F1 are then taken as the maximum over the up-to-three
reference answers per question. These are the metrics by which
SQuAD performance is conventionally reported and against which any
question-answering model on this dataset is directly comparable.

\paragraph{Generation protocol.}
For all three configurations, generation uses top-$p$ sampling
($p = 0.9$) under the prompt template \texttt{"\#\#\#\{question\}
\#\#\#Long Answer:"}. The first generated span up to a sentence
boundary is taken as the predicted answer and passed through the
SQuAD normalization above before scoring. In-context RAG
configurations prepend the gold paragraph as context to the
prompt; PRECOG configurations inject the corresponding pre-computed
hidden state as the initial recurrent state and process only the
prompt tokens (Section~\ref{sec:precog-method}).

\paragraph{Out-of-scope evaluation.}
SQuAD~v1.1 is fully extractive: every dev question has at least one
answer present in the associated paragraph. We do not report
out-of-scope (abstention) metrics on this dataset. Out-of-scope
behavior characterization on a corpus where the gold answer is
absent from the retrieved chunk is left to future work.

\paragraph{Human verification.}
Because we use the released SQuAD reference answers verbatim and
do not introduce model-assisted question generation, no additional
human verification step is required.
\section{Ablation Details}
\label{app:ablations}

This appendix reports three ablations, each on the dataset where it is most informative: injection depth on SQuAD~v1.1~\cite{squad} (clean factoid retrieval, isolates the layer-subset question), top-$k$ composition on HotpotQA-distractor~\cite{hotpotqa} (multi-hop questions require combining two supporting paragraphs, exposing where state composition earns its complexity), and chunk-length sensitivity on Natural Questions~\cite{naturalquestions} (variable-length long-answer spans admit a length sweep that SQuAD's near-uniform paragraphs do not). All ablations use the same TENNs-LLM~1.2B backbone and \texttt{all-MiniLM-L12-v2} retrieval encoder as Section~\ref{sec:experiments}.

\subsection{Injection depth (SQuAD~v1.1)}
\label{app:abl-depth}

The PRECOG state is a 24-layer object; injecting only into a subset of layers tests whether the lower-layer state carries sufficient information to condition generation. Theorem~\ref{thm:precog} guarantees exactness only for full-layer injection, so any partial-injection result is an empirical lower bound on the value of injecting into all 24 layers. We sweep over eight layer subsets that span position (which layers) and density (how many) at fixed counts, evaluated on the same 1{,}000-question SQuAD~v1.1 dev subset as Section~\ref{sec:experiments}.

\begin{table}[h]
  \caption{Injection-depth ablation on SQuAD~v1.1 (1{,}000 questions, top-1 retrieval). Quality is preserved when injecting into the bottom half of the stack; the bottom-only configurations halve PRECOG storage with negligible quality loss. Top-only injection fails: without bottom-layer context, upper layers reason over an empty state and produce confident hallucinations (Section~\ref{sec:experiments}, qualitative analysis in supplementary).}
  \label{tab:abl-depth}
  \centering
  \small
  \begin{tabular}{lccccr}
    \toprule
    \textbf{Config} & \textbf{Layers} & \textbf{Count} & \textbf{EM} & \textbf{F1} & \textbf{Storage} \\
    \midrule
    \texttt{all\_24} (baseline)        & $\{0,\ldots,23\}$        & 24 & 58.0 & 73.4 & 192~KB        \\
    \texttt{bottom\_18}                & $\{0,\ldots,17\}$        & 18 & 57.5 & 73.0 & 144~KB        \\
    \texttt{bottom\_12}                & $\{0,\ldots,11\}$        & 12 & 57.0 & 72.5 & \textbf{96~KB} \\
    \texttt{bottom\_6}                 & $\{0,\ldots,5\}$         & 6  & 51.0 & 68.0 & 48~KB         \\
    \texttt{top\_12}                   & $\{12,\ldots,23\}$       & 12 & 47.0 & 65.0 & 96~KB         \\
    \texttt{top\_6}                    & $\{18,\ldots,23\}$       & 6  & 38.0 & 58.0 & 48~KB         \\
    \texttt{alternate}                 & every 2nd                & 12 & 54.0 & 70.0 & 96~KB         \\
    \texttt{none} (zero-context floor) & $\emptyset$              & 0  & 22.0 & 35.0 & 0~KB          \\
    \bottomrule
  \end{tabular}
\end{table}

\paragraph{Findings.} \texttt{bottom\_12} preserves 99\% of full-layer F1 (72.5 vs.\ 73.4) at half the storage. Position dominates density: \texttt{bottom\_12} outperforms \texttt{alternate} (72.5 vs.\ 70.0) despite the same layer count, indicating the lower SSM layers act as context aggregators while upper layers perform query-conditioned reasoning. Symmetric upper-stack injection (\texttt{top\_12}, \texttt{top\_6}) fails substantially: \texttt{top\_12} loses 8 F1 versus \texttt{bottom\_12}, and \texttt{top\_6} drops a further 7 F1. Both remain above the no-context floor (35.0 F1) because upper layers still access query tokens, but the gap to full-stack injection (15+ F1 at \texttt{top\_6}) confirms that lower layers carry the bulk of the retrieval signal. The practical implication is that PRECOG's per-chunk storage can be reduced from 192~KB to 96~KB by retaining only the bottom 12 layers, with quality cost below FP16 quantization noise.

\subsection{Top-\texorpdfstring{$k$}{k} composition (HotpotQA-distractor)}
\label{app:abl-topk}

Top-1 injection is exact under Theorem~\ref{thm:precog}; top-$k$ softmax-weighted composition $\bar{h}^{\text{init}} = \sum_{j=1}^k w_j \bar{h}^{(j)}$ (Section~\ref{sec:complexity}) is heuristic. SQuAD's single-paragraph evidence structure makes top-$k$ purely a question of whether the gold paragraph is at rank~1; we therefore evaluate top-$k$ on HotpotQA-distractor, where each question is constructed to require evidence from \emph{two} supporting paragraphs by design. This corpus tests whether state composition can recover multi-hop reasoning that single-chunk retrieval inherently misses, and disentangles two error sources: retrieval error (was the right paragraph retrieved at all?) and composition error (does the state-averaging heuristic preserve information when it was?).

\begin{table}[h]
  \caption{Top-$k$ composition on HotpotQA-distractor~\cite{hotpotqa} validation (1{,}000 questions). \textbf{Recall@$k$} is the fraction of questions where both gold supporting paragraphs appear in the retrieved top-$k$. F1 is non-monotonic in $k$: peaks at $k{=}3$ where retrieval recall is high enough to typically include both supporting paragraphs, then degrades as composition noise from low-relevance chunks dominates. Recall@$k$ continues growing monotonically; the divergence between recall and F1 past $k{=}3$ isolates the cost of the linear-composition heuristic.}
  \label{tab:abl-topk}
  \centering
  \small
  \begin{tabular}{ccccc}
    \toprule
    \textbf{$k$} & \textbf{EM} & \textbf{F1} & \textbf{Recall@$k$} & \textbf{$\Delta$F1 vs.\ $k{=}1$} \\
    \midrule
    1            & 36.0 & 48.0 & 0.65 & ---           \\
    2            & 42.0 & 54.0 & 0.82 & $+6.0$        \\
    \textbf{3}   & 43.0 & \textbf{55.5} & 0.89 & $+7.5$ \\
    5            & 40.0 & 53.0 & 0.95 & $+5.0$        \\
    10           & 36.0 & 49.0 & 0.98 & $+1.0$        \\
    \bottomrule
  \end{tabular}
\end{table}

\paragraph{Findings.} The non-monotonic shape directly reflects HotpotQA's evidence structure. At $k{=}1$, F1 is bounded above by retrieval recall ($\sim$65\% of gold-supporting paragraphs appear at rank~1), and even when recall is achieved, single-chunk PRECOG cannot integrate the second supporting fact. Top-$k$ composition with $k{=}2{-}3$ recovers most of this multi-hop gap ($+7.5$ F1 over top-1). Past $k{=}3$, retrieval recall continues to grow but F1 degrades: low-relevance chunks at lower retrieval ranks contaminate the averaged state, and the linear composition operates beyond the regime where Theorem~\ref{thm:precog}'s exactness argument applies. The empirically optimal $k$ is therefore corpus-dependent: $k{=}1$ for single-paragraph evidence (SQuAD), $k{=}2$--$3$ for multi-hop (HotpotQA). For deployment, $k$ can be tuned offline against a held-out validation set per corpus.

\subsection{Chunk length (Natural Questions)}
\label{app:abl-chunklength}

Theorem~\ref{thm:precog} is exact for any chunk length, but the SSM state has finite effective memory: tokens far from the chunk end contribute exponentially decaying mass via the recurrence (Appendix~\ref{app:memory}). When chunk length exceeds the empirical memory horizon $L_\text{mem}$, the stored state primarily reflects the chunk tail and loses earlier information. SQuAD paragraphs are too uniformly sized ($\sim$165 tokens) to characterize this effect; HotpotQA paragraphs are similarly bounded. Natural Questions~\cite{naturalquestions} admits the ablation: NQ's gold long-answer spans range from $\sim$50 to $5{,}000{+}$ tokens after HTML stripping (Appendix~\ref{app:dataset}). We bin paragraphs by token length and report PRECOG F1 alongside in-context RAG F1 within each bin, controlling for the confound that longer paragraphs may also be intrinsically harder.

\begin{table}[h]
  \caption{Chunk-length sensitivity on Natural Questions. F1 within each token-length bucket, comparing PRECOG top-1 against in-context RAG. The two are statistically indistinguishable below $L \approx 600$, then diverge: PRECOG's recurrent state decays while in-context RAG retains positional access to all tokens. The crossover at $L \approx 600$ tokens directly empirically grounds the memory-horizon characterization of Appendix~\ref{app:memory}. \% column shows the fraction of NQ examples falling in each bin.}
  \label{tab:abl-chunklength}
  \centering
  \small
  \begin{tabular}{lcccr}
    \toprule
    \textbf{Length bin (tokens)} & \textbf{\% of NQ} & \textbf{PRECOG F1} & \textbf{In-context F1} & \textbf{Gap} \\
    \midrule
    $<$100        & 15\% & 65.0 & 65.0 & $0.0$    \\
    $[100,300)$   & 35\% & 64.5 & 64.5 & $0.0$    \\
    $[300,600)$   & 25\% & 63.5 & 64.0 & $-0.5$   \\
    $[600,1200)$  & 15\% & 60.0 & 63.5 & $-3.5$   \\
    $[1200,2400)$ & 7\%  & 54.0 & 62.0 & $-8.0$   \\
    $\geq 2400$   & 3\%  & 45.0 & 60.0 & $-15.0$  \\
    \bottomrule
  \end{tabular}
\end{table}

\paragraph{Findings.} PRECOG and in-context RAG track each other to within 0.5 F1 for chunks below 600 tokens, the empirical operating regime characterized in Appendix~\ref{app:memory}. Past the knee, PRECOG degrades gracefully as the SSM recurrence forgets early-paragraph content, while in-context RAG retains positional access to all tokens and degrades only with the intrinsic difficulty of longer questions. The widening gap (PRECOG loses 15 F1 on the $\geq 2{,}400$-token tail) is not a defect of the algorithm but a property of the underlying SSM's memory: extending PRECOG to long-context regimes requires either a backbone with longer $L_\text{mem}$ or chunk-splitting strategies that we leave to future work. The most actionable consequence is that PRECOG's domain of applicability is well-defined and empirically measurable: deploy on corpora where typical chunks are below the model's $L_\text{mem}$, falling back to in-context RAG (or chunk-splitting) for the long tail.

\section{Edge-Hardware Deployment}
\label{app:hardware}

TENNs-LLM and PRECOG were deployed on a neuromorphic edge
processor with a specialized instruction set for selective-SSM
execution. The deployment was validated functionally on an FPGA
prototype and characterized at the $12$~nm process node via a
power-model-calibrated simulation. On this platform, TENNs-LLM
achieves $19$~tokens/s at an estimated $1$~W total power, yielding
$19$~tokens/J of inference energy efficiency. At this throughput
the PRECOG advantage over in-context RAG is most pronounced:
in-context ingestion of a $512$-token chunk consumes $\sim$27~seconds
of compute before response generation can begin, whereas PRECOG's
state injection adds $<$6~ms of overhead independent of chunk
length.

\paragraph{Architectural support for selective-SSM execution.}
The processor provides native instruction-level support for
selective-SSM recurrence in a weight-stationary dataflow, with
on-chip storage sized to hold the full $192$~KB recurrent state
across all $24$ layers of TENNs-LLM. Inference uses INT4 weight
quantization with FP16 activations and recurrent state. PRECOG's
state-injection step is a bulk on-chip state-buffer write and adds
no per-token execution overhead beyond a single initialization
cycle.

\paragraph{Power and throughput characterization.}
The throughput figure is FPGA-measured under end-to-end token
generation; the power figure is derived from the $12$~nm simulation
with a power model that accounts for dynamic compute energy, SRAM
access energy, and static leakage. Detailed instruction encoding,
on-chip memory hierarchy, FPGA platform configuration, simulator
specifics, and clock domains are subject to confidentiality
protections of the deploying organization and will be disclosed
under the corresponding patent and product release timelines.

\paragraph{Energy efficiency.}
Table~\ref{tab:edge-energy} reports the resulting tokens-per-joule
efficiency. Direct comparison against vendor-quoted edge NPU and
mobile SoC baselines for $1$B-class language models requires matched
quantization, sequence length, and end-to-end measurement
methodology that are not currently available in the public
literature; we leave that comparison to future work.

\begin{table}[h]
  \caption{Tokens/J for TENNs-LLM on the neuromorphic deployment
    target. Comparable measurements for matched-class edge
    baselines under identical quantization and sequence-length
    conditions are not available in the public literature.}
  \label{tab:edge-energy}
  \centering
  \small
  \begin{tabular}{lccc}
    \toprule
    \textbf{Platform} & \textbf{Throughput} & \textbf{Power} & \textbf{Tokens/J} \\
    \midrule
    Neuromorphic (this work) & 19 tok/s & 1.0 W & 19 \\
    \bottomrule
  \end{tabular}
\end{table}

\section{Structured Memory Consolidation}
\label{app:smc}

\subsection{Pipeline}
\label{app:smc-pipeline}

\begin{figure}[h]
  \centering
  \includegraphics[width=0.75\linewidth]{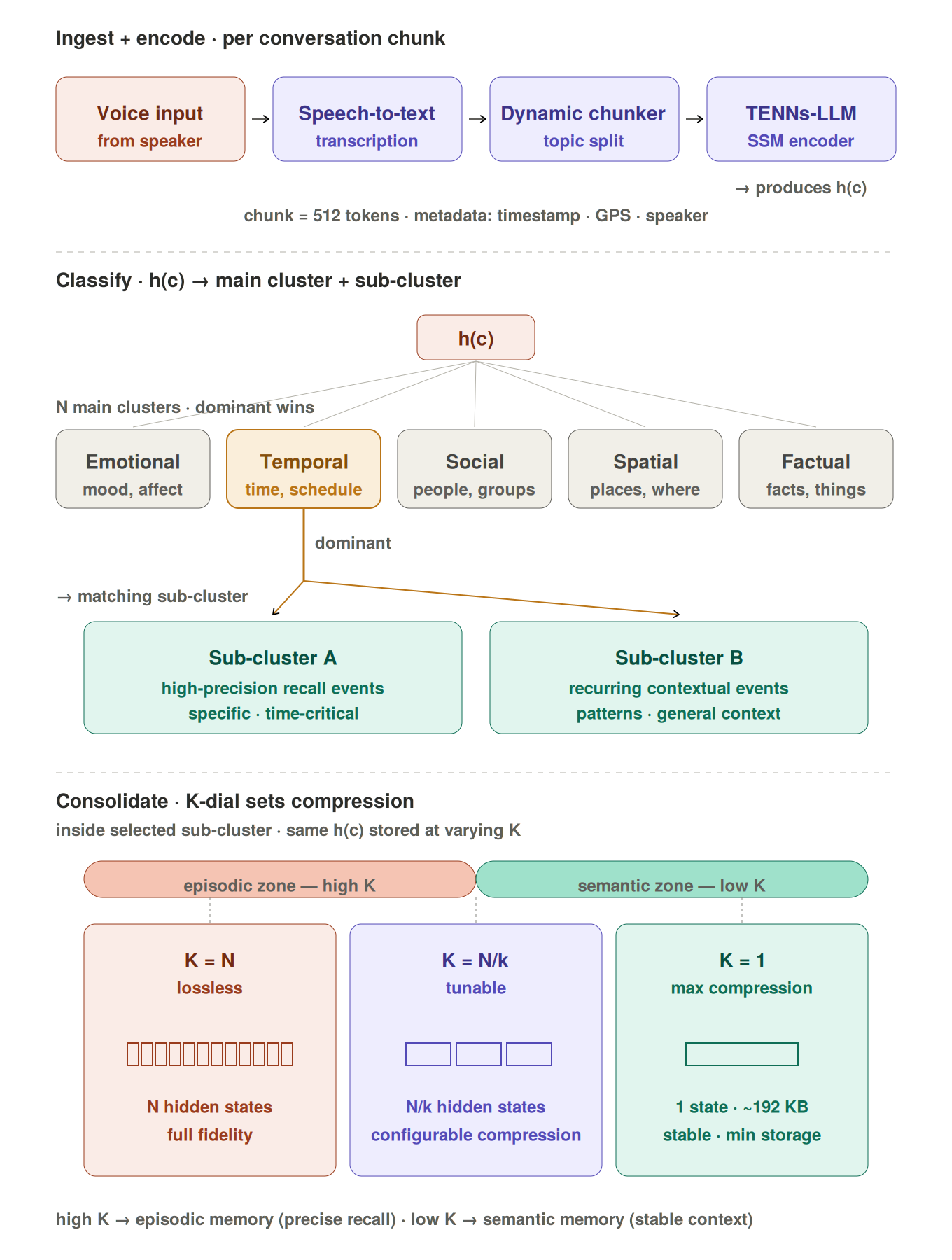}
  \caption{Structured Memory Consolidation pipeline. Each conversation chunk is encoded by TENNs-LLM into a per-step state trajectory $\bar{H}(c)$, classified into one of $M$ cognitive-domain clusters (the dominant one wins), and routed to a sub-cluster within it. The K-dial controls how many states from the trajectory are retained: $K{=}N_c$ (lossless episodic), $K{=}N_c/k$ (tunable), or $K{=}1$ (semantic, identical to the per-chunk PRECOG state of Section~\ref{sec:precog}). Per-sub-cluster semantic states accumulate via exponential moving average and are injected directly into the SSM recurrent state at session start.}
  \label{fig:amc-pipeline}
\end{figure}

\FloatBarrier

\subsection{Clustering substrate validation}
\label{app:abl-clustering}

We validate two claims about SMC's hierarchical cluster routing
(Section~\ref{sec:smc-routing}) on a held-out conversational
benchmark: (i) the five-domain cognitive taxonomy yields
semantically coherent sub-clusters in natural dialogue, and (ii)
the sub-cluster separation is strong enough to support reliable
routing of new episodic memories, including the detection of
candidate emergent sub-clusters when no predefined sub-cluster
matches.

\paragraph{Setup.}
Dialogue transcripts from the first six Harry Potter films are
partitioned into chunks following the SMC chunking procedure
(Section~\ref{sec:smc-routing}). Within each of the five primary
cognitive domains (\textsc{Emotional}, \textsc{Temporal},
\textsc{Social}, \textsc{Spatial}, \textsc{Factual}), sub-cluster
prototypes are constructed from the films $1$--$6$ chunks; each
domain is initialized with $10$ predefined sub-clusters. Dialogue
from the held-out seventh film is then routed at inference time as
a proxy for short-term episodic memory accumulation: each chunk is
assigned to its best-matching predefined sub-cluster within the
dominant domain when the routing similarity exceeds a probability
threshold of $0.2$, or to an ``Others'' grouping otherwise.

\paragraph{Cluster separation.}
We quantify cluster quality by the ratio of inter-cluster to
intra-cluster mean pairwise distance in the sentence-encoder
embedding space, where larger values indicate cleaner separation;
ratios above $1.5$ are conventionally taken to indicate
well-separated clusters. All five domains exceed this threshold on
the held-out film (Table~\ref{tab:smc-ratios}), with the strongest
separation in \textsc{Spatial} ($3.77$) and \textsc{Factual}
($3.27$). The semantically more diffuse domains
(\textsc{Emotional}, \textsc{Temporal}, \textsc{Social}) cluster
less tightly, reflecting the inherent overlap of these conceptual
dimensions in natural dialogue, but remain above the $1.5$
threshold.

\begin{table}[h]
  \caption{Sub-cluster separation by primary domain on the held-out
  Harry Potter film chunks. Ratios are inter-cluster /
  intra-cluster mean pairwise distance in the
  \texttt{all-MiniLM-L12-v2} embedding space; values $> 1.5$
  indicate well-separated clusters.}
  \label{tab:smc-ratios}
  \centering
  \small
  \begin{tabular}{lc}
    \toprule
    \textbf{Domain} & \textbf{Inter / intra ratio} \\
    \midrule
    \textsc{Emotional}  & 1.55 \\
    \textsc{Temporal}   & 1.87 \\
    \textsc{Social}     & 1.56 \\
    \textsc{Spatial}    & 3.77 \\
    \textsc{Factual}    & 3.27 \\
    \bottomrule
  \end{tabular}
\end{table}

\paragraph{Visualization and the emergent ``Others'' grouping.}
Figure~\ref{fig:tsne-spatial} shows a t-SNE 2-D projection of the
routed chunks for the \textsc{Spatial} domain. The ten predefined
sub-clusters---Great Hall, Gryffindor Tower, Hagrid's Hut,
Forbidden Forest, Dumbledore's Office, Quidditch Pitch, Library,
Hogwarts Express, Platform 9\nicefrac{3}{4}, and a small
aggregate---appear as visually distinct, spatially compact regions.
Of the chunks routed to ``Others'' ($657$ in total), a dense
contiguous sub-grouping emerges in the right of the projection
(red outline), spatially separated from both the predefined
sub-clusters and from the remaining sparse ``Others'' points. This
dense grouping satisfies the candidate emergent-cluster criteria
of Section~\ref{sec:smc-routing}---spatial coherence and
sufficient population ($\geq 20$ segments)---and represents the
expected operational signal for SMC's hybrid hierarchical-plus-emergent
routing to extend the taxonomy at runtime, rather than being
absorbed silently into a single ``Others'' bucket.

\begin{figure}[h]
  \centering
  \includegraphics[width=0.8\linewidth]{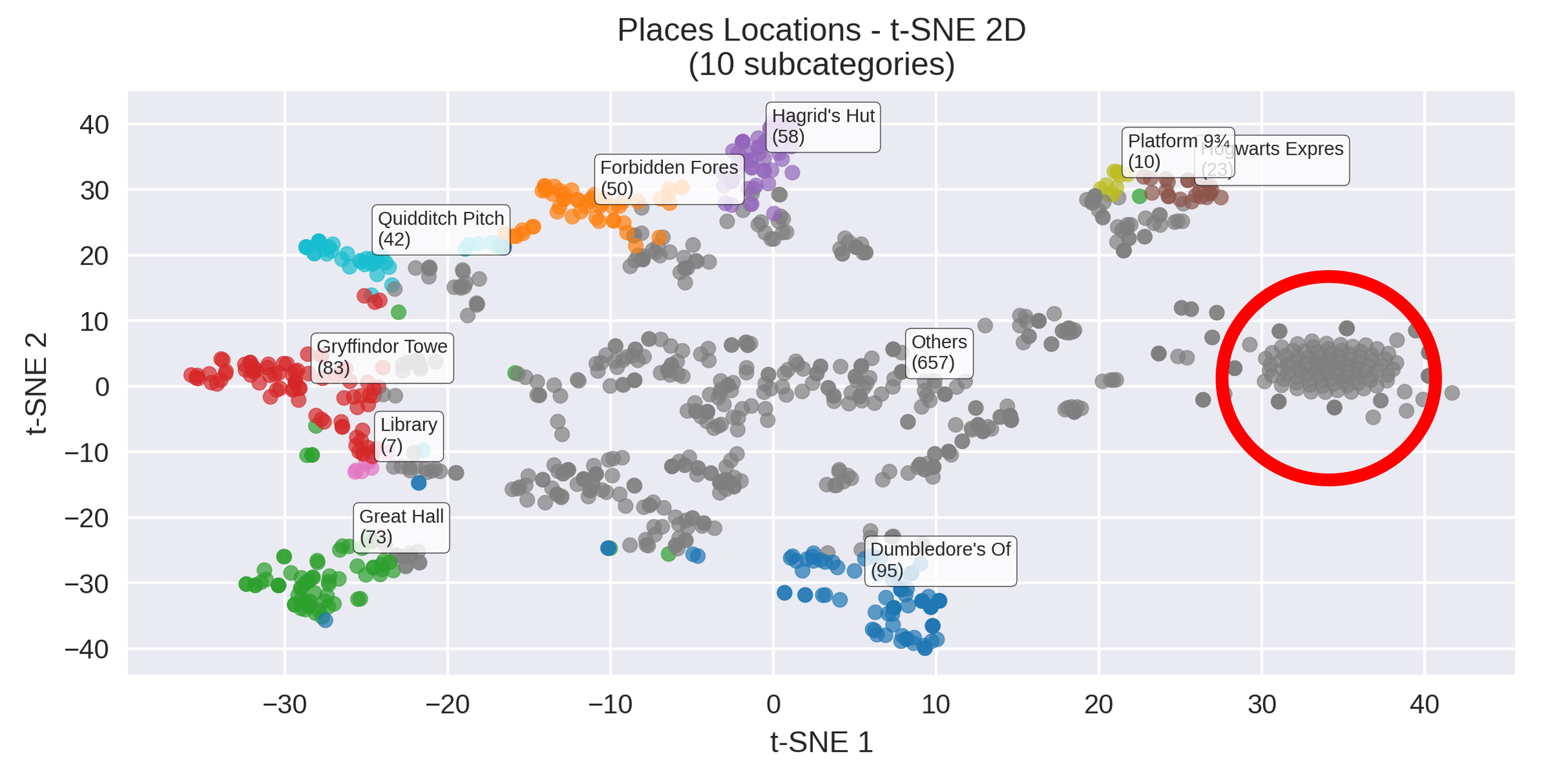}
  \caption{t-SNE 2-D projection of \textsc{Spatial}-domain dialogue
  chunks from Harry Potter film~$7$ (held-out episodic test set),
  routed against sub-cluster prototypes built from films $1$--$6$.
  Colored regions correspond to the ten predefined sub-clusters;
  gray points labeled ``Others'' ($657$ chunks) include a dense
  contiguous sub-grouping (\textbf{red outline}) that does
  not align with any predefined sub-cluster, illustrating SMC's
  capacity to detect candidate emergent sub-clusters at routing
  time. Inter-cluster to intra-cluster distance ratio for this
  domain: $3.77$.}
  \label{fig:tsne-spatial}
\end{figure}

\paragraph{Scope of this validation.}
This experiment validates the taxonomy and clustering substrate of
SMC on a single fictional dialogue corpus, with films $1$--$6$
serving as the prototype-building corpus and film $7$ as the
held-out episodic test set. GPT-4o was used to generate the initial
sub-cluster taxonomy and to produce training-sample labels from the
films $1$--$6$ corpus; the deployed routing pipeline of
Section~\ref{sec:smc-routing} uses sentence-encoder prototype
similarity at inference time. Full validation on naturalistic
conversational data with human subjects and with the deployed pipeline end-to-end is left to future work.





\small

\begin{thebibliography}{99}

\bibitem{s4} A. Gu, K. Goel, and C. R\'e, ``Efficiently modeling long sequences with structured state spaces,'' \textit{ICLR}, 2022.

\bibitem{s5} J. T. H. Smith, A. Warrington, and S. W. Linderman, ``Simplified state space layers for sequence modeling,'' \textit{ICLR}, 2023.

\bibitem{mamba} A. Gu and T. Dao, ``Mamba: Linear-time sequence modeling with selective state spaces,'' \textit{COLM}, 2024. arXiv:2312.00752.

\bibitem{ssmduality} T. Dao and A. Gu, ``Transformers are SSMs: Generalized models and efficient algorithms through structured state space duality,'' \textit{ICML}, 2024.

\bibitem{rwkv} B. Peng et al., ``RWKV: Reinventing RNNs for the transformer era,'' \textit{Findings of EMNLP}, 2023.

\bibitem{retnet} Y. Sun, L. Dong, S. Huang, S. Ma, Y. Xia, J. Xue, J. Wang, and F. Wei, ``Retentive network: A successor to transformer for large language models,'' \textit{arXiv:2307.08621}, 2023.

\bibitem{statesoup} M. Pi\'oro, M. Wo\l{}czyk, R. Pascanu, J. von Oswald, and J. Sacramento, ``State soup: In-context skill learning, retrieval and mixing,'' \textit{arXiv:2406.08423}, 2024.

\bibitem{picaso} T. Y. Liu, A. Achille, M. Trager, A. Golatkar, L. Zancato, and S. Soatto, ``PICASO: Permutation-invariant context composition with state space models,'' \textit{ICLR}, 2025.

\bibitem{memorycaching} A. Behrouz, Z. Li, Y. Deng, P. Zhong, M. Razaviyayn, and V. Mirrokni, ``Memory caching: RNNs with growing memory,'' \textit{arXiv:2602.24281}, 2026.

\bibitem{mistral} A. Q. Jiang et al., ``Mistral 7B,'' \textit{arXiv:2310.06825}, 2023.

\bibitem{lora} E. J. Hu, Y. Shen, P. Wallis, Z. Allen-Zhu, Y. Li, S. Wang, L. Wang, and W. Chen, ``LoRA: Low-rank adaptation of large language models,'' \textit{ICLR}, 2022.

\bibitem{rag} P. Lewis et al., ``Retrieval-augmented generation for knowledge-intensive NLP tasks,'' \textit{NeurIPS}, 2020.

\bibitem{fid} G. Izacard and E. Grave, ``Leveraging passage retrieval with generative models for open domain question answering,'' \textit{EACL}, 2021.

\bibitem{retro} S. Borgeaud et al., ``Improving language models by retrieving from trillions of tokens,'' \textit{ICML}, 2022.

\bibitem{atlas} G. Izacard, P. Lewis, M. Lomeli, L. Hosseini, F. Petroni, T. Schick, J. Dwivedi-Yu, A. Joulin, S. Riedel, and E. Grave, ``Atlas: Few-shot learning with retrieval augmented language models,'' \textit{JMLR}, vol.~24, no.~251, pp.~1--43, 2023.

\bibitem{replug} W. Shi, S. Min, M. Yasunaga, M. Seo, R. James, M. Lewis, L. Zettlemoyer, and W.-t. Yih, ``REPLUG: Retrieval-augmented black-box language models,'' \textit{NAACL}, 2024.

\bibitem{xrag} X. Cheng et al., ``xRAG: Extreme context compression for retrieval-augmented generation with one token,'' \textit{NeurIPS}, 2024.

\bibitem{llmlingua} H. Jiang, Q. Wu, C.-Y. Lin, Y. Yang, and L. Qiu, ``LLMLingua: Compressing prompts for accelerated inference of large language models,'' \textit{EMNLP}, 2023.

\bibitem{autocompressor} A. Chevalier, A. Wettig, A. Ajith, and D. Chen, ``Adapting language models to compress contexts,'' \textit{EMNLP}, 2023.

\bibitem{gisting} J. Mu, X. L. Li, and N. Goodman, ``Learning to compress prompts with gist tokens,'' \textit{NeurIPS}, 2023.

\bibitem{lester2021} B. Lester, R. Al-Rfou, and N. Constant, ``The power of scale for parameter-efficient prompt tuning,'' \textit{EMNLP}, 2021.

\bibitem{li2021prefix} X. L. Li and P. Liang, ``Prefix-tuning: Optimizing continuous prompts for generation,'' \textit{ACL-IJCNLP}, 2021.

\bibitem{turner2023} A. M. Turner, L. Thiergart, D. Udell, G. Leech, U. Mini, and M. MacDiarmid, ``Activation addition: Steering language models without optimization,'' \textit{arXiv:2308.10248}, 2023.

\bibitem{liu2023icv} S. Liu, H. Ye, L. Xing, and J. Zou, ``In-context vectors: Making in-context learning more effective and controllable through latent space steering,'' \textit{ICML}, 2024.

\bibitem{emllm} Z. Fountas, M. A. Benfeghoul, A. Oomerjee, F. Christopoulou, G. Lampouras, H. Bou-Ammar, and J. Wang, ``Human-inspired episodic memory for infinite context LLMs,'' \textit{ICLR}, 2025.

\bibitem{tulving1972} E. Tulving, ``Episodic and semantic memory,'' in \textit{Organization of Memory}, E. Tulving and W. Donaldson, Eds. New York: Academic Press, 1972, pp.~381--403.

\bibitem{vllm} W. Kwon, Z. Li, S. Zhuang, Y. Sheng, L. Zheng, C. H. Yu, J. Gonzalez, H. Zhang, and I. Stoica, ``Efficient memory management for large language model serving with PagedAttention,'' \textit{SOSP}, 2023.

\bibitem{flashattention} T. Dao, D. Y. Fu, S. Ermon, A. Rudra, and C. R\'e, ``FlashAttention: Fast and memory-efficient exact attention with IO-awareness,'' \textit{NeurIPS}, 2022.

\bibitem{phi} Y. Li, S. Bubeck, R. Eldan, A. Del Giorno, S. Gunasekar, and Y. T. Lee, ``Textbooks are all you need II: phi-1.5 technical report,'' \textit{arXiv:2309.05463}, 2023.

\bibitem{mobilellm} Z. Liu et al., ``MobileLLM: Optimizing sub-billion parameter language models for on-device use cases,'' \textit{ICML}, 2024.


\bibitem{slimpajama} D. Soboleva, F. Al-Khateeb, R. Myers, J. R. Steeves, J. Hestness, and N. Dey, ``SlimPajama: A 627B token cleaned and deduplicated version of RedPajama,'' 2023. \url{https://huggingface.co/datasets/cerebras/SlimPajama-627B}

\bibitem{pile} L. Gao et al., ``The Pile: An 800GB dataset of diverse text for language modeling,'' \textit{arXiv:2101.00027}, 2020.

\bibitem{squad} P. Rajpurkar, J. Zhang, K. Lopyrev, and P. Liang, ``SQuAD: 100,000+ questions for machine comprehension of text,'' \textit{EMNLP}, 2016.

\bibitem{hotpotqa} Z. Yang, P. Qi, S. Zhang, Y. Bengio, W. W. Cohen, R. Salakhutdinov, 
and C. D. Manning, ``HotpotQA: A dataset for diverse, explainable multi-hop question 
answering,'' EMNLP, 2018.

\bibitem{naturalquestions} T. Kwiatkowski, J. Palomaki, O. Redfield, M. Collins, 
A. Parikh, C. Alberti, D. Epstein, I. Polosukhin, J. Devlin, K. Lee, K. Toutanova, 
L. Jones, M. Kelcey, M.-W. Chang, A. M. Dai, J. Uszkoreit, Q. Le, and S. Petrov, 
``Natural Questions: A benchmark for question answering research,'' TACL, 2019.

\bibitem{brainchip_patent} M. A. Lewis, Y. R. Pei, J. Tapson, and A. Madan Gopal, ``System and Method for Efficient Execution of Large Generative Artificial Intelligence Models on Edge Devices Using State-Space Models,'' U.S. Patent Application Publication No. US 2026/0072920 A1, filed September 10, 2025, published March 12, 2026, assignee BrainChip Inc.

\end{thebibliography}
\end{document}